\documentclass[letterpaper, 10 pt, conference]{ieeeconf}  
\IEEEoverridecommandlockouts                              
\usepackage{amsmath,amssymb} 
\usepackage{mathtools}
\usepackage{bm}
\usepackage{hyperref}
\usepackage{cite}
\usepackage{dsfont}
\usepackage{algorithm} 
\usepackage{algpseudocode}
\usepackage{color}

\allowdisplaybreaks

\usepackage{tikz}
\usetikzlibrary{arrows.meta, positioning, calc}
\usepackage{amsmath, amssymb}
\usepackage{amsthm}

\usepackage{pgfplots}
\pgfplotsset{compat=1.18}
\usepgfplotslibrary{fillbetween}
\usepackage{xcolor}

\definecolor{cP2P}{RGB}{0,90,181}       
\definecolor{cAccel}{RGB}{200,50,50}     
\definecolor{cSAC}{RGB}{0,148,115}       
\definecolor{cPPO}{RGB}{190,120,20}      
\newtheorem{theorem}{Theorem}

\newtheorem{proposition}{Proposition}
\newtheorem{corollary}{Corollary}
\newtheorem{remark}{Remark}

\newtheorem{definition}{Definition}
\newtheorem{problem}{Problem}
\newcommand{\E}{\mathbb{E}}

\newcommand{\piE}{\pi_{\mathrm{P}}}
\newcommand{\piT}{\pi_{\theta}}

\title{\LARGE \bf
PriPG-RL: Privileged Planner-Guided Reinforcement Learning for Partially Observable Systems with Anytime-Feasible MPC
}

\author{Mohsen Amiri$^{1,\dag}$, Mohsen Amiri$^{2,\dag}$, Ali Beikmohammadi$^1$, Sindri Magnu\'sson$^1$,\\ and Mehdi Hosseinzadeh$^2$, \IEEEmembership{Senior Member, IEEE}
\thanks{This work was supported by the United States National Science Foundation (awards ECCS-2515358 and CNS-2502856), the Swedish Research Council (grant 2024-04058), and Sweden’s Innovation Agency (Vinnova). Computational resources were provided by the National Academic Infrastructure for Supercomputing in Sweden (NAISS) at C3SE, partially funded by the Swedish Research Council (grant 2022-06725).}
\thanks{$^\dag$ The authors contributed equally to this work.}
\thanks{$^1$The authors are with the Department of Computer and
System Science, Stockholm University, 11419 Stockholm, Sweden, (Email: mohsen.amiri@dsv.su.se).}
\thanks{$^2$The authors are with the School of Mechanical and Material Engineering, Washington State University, Pullman, WA 99164, USA (Email: \{mohsen.amiri, mehdi.hosseinzadeh\}@wsu.edu. }
}

\begin{document}

\maketitle
\thispagestyle{empty}
\pagestyle{empty}

\begin{abstract}
This paper addresses the problem of training a reinforcement learning (RL) policy under partial observability by exploiting a privileged, anytime-feasible planner agent available exclusively during training. We formalize this as a Partially Observable Markov Decision Process (POMDP) in which a planner agent with access to an approximate dynamical model and privileged state information guides a learning agent that observes only a lossy projection of the true state. To realize this framework, we introduce an anytime-feasible Model Predictive Control (MPC) algorithm that serves as the planner agent. For the learning agent, we propose Planner-to-Policy Soft Actor-Critic (P2P-SAC), a method that distills the planner agent's privileged knowledge to mitigate partial observability and thereby improve both sample efficiency and final policy performance. We support this framework with rigorous theoretical analysis. Finally, we validate our approach in simulation using NVIDIA Isaac Lab and successfully deploy it on a real-world Unitree Go2 quadruped navigating complex, obstacle-rich environments.
\end{abstract}

\section{Introduction}

Model-free deep Reinforcement Learning (RL) can produce policies that execute with very low latency on resource-constrained platforms~\cite{beikmohammadi2023ta,mnih2015human,duan2016benchmarking}. However, a fundamental challenge arises when the learning agent has only partial access to the true environment state. In this Partially Observable Markov Decision Process (POMDP) setting, observations do not fully determine the latent state, severely destabilizing value functions conditioned solely on observations~\cite {lauri2022partially}. Consequently, standard RL methods like PPO~\cite{schulman2017proximal}, TD3~\cite{fujimoto2018addressing}, and Soft Actor--Critic (SAC)~\cite{haarnoja2018soft} frequently fail. State aliasing yields uninformative early exploration~\cite{beikmohammadi2024accelerating}, trapping policies in suboptimal local minima~\cite{janner2019trust,amodei2016concrete,uesato2018rigorous} and making convergence prohibitively slow~\cite{oh2025discovering,shakya2023reinforcement,sutton1998reinforcement}. A line of approaches to this challenge is to optimize reactive (memoryless) policies within a surrogate MDP induced by the observation space, accepting an inherent approximation gap~\cite{singh1994learning}. In a specific class of problems known as SNS-MDPs, where the unobservable components follow an autonomous Markov chain, fundamental proofs demonstrate that conventional RL algorithms can safely converge to an average MDP corresponding to the hidden states' stationary distribution~\cite{amiri2024convergence, amiri2025reinforcement}. However, in general continuous control, the surrogate MDP is highly policy-dependent and riddled with state aliasing. 

\begin{figure}
    \centering
    \includegraphics[width=0.95\linewidth]{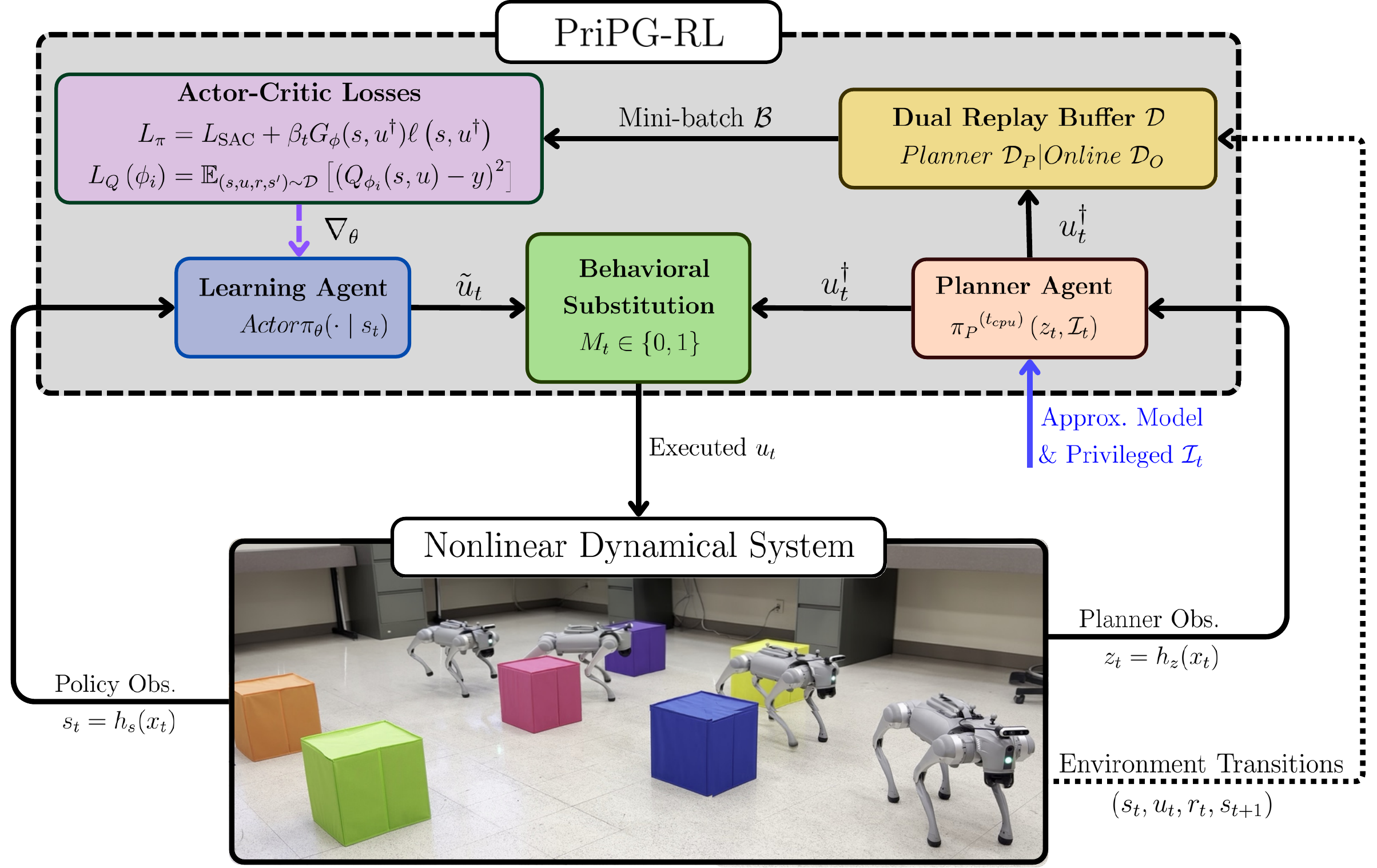}\vspace{-0.2cm}
    \caption[Illustration of the proposed PriPG-RL architecture.]{\footnotesize Illustration of the proposed PriPG-RL architecture during training. The planner agent provides guidance exclusively during training and is not used at runtime. The hardware image is included for visualization purposes only and does not represent a closed-loop deployment of the training architecture. A video demonstration of the hardware deployment, along with the complete source code, is available at GitHub.\footnote{\scriptsize \url{https://github.com/mohsen1amiri/PriPG-RL_UnitreeGo2.git}}}\vspace{-0.5cm}
    \label{fig:PriPG-RL architecture}
\end{figure}
\footnotetext{\scriptsize \url{https://github.com/mohsen1amiri/PriPG-RL_UnitreeGo2.git}}

To bridge this optimality gap, a natural remedy is privileged learning, where a teacher with full state access guides a student operating under restricted observations~\cite{chen2020learning,lee2020learning,margolis2022rapid,kumar2021rma}. In parallel, the RL community has developed robust methodologies to incorporate prior knowledge into training. Under the umbrella of RL from demonstrations (RLfD), methods like DQfD~\cite{hester2018deep}, DDPGfD~\cite{vecerik2018leveraging}, and AWAC~\cite{nair2020awac} augment learning with expert data. Alternatively, model-based planning can generate online training targets~\cite{lowrey2018plan,nagabandi2018neural}. Specifically, recent work~\cite{beikmohammadi2024accelerating} showed that regularizing an SAC actor toward an approximate policy of a heuristic algorithm via a quadratic pseudo-label loss accelerates learning. However, a critical limitation of these RLfD and regularization frameworks is that they are mathematically formulated for fully observable MDPs. Consequently, they struggle to resolve the POMDP context. For instance, the output-space imitation in~\cite{beikmohammadi2024accelerating} assumes a one-to-one state-action mapping, which suffers from vanishing gradients at the SAC actor's boundaries, disproportionately paralyzing the network during safety-critical evasive maneuvers in aliased states. Furthermore, it employs a linear decay schedule that eventually eliminates the heuristic algorithm's guidance entirely. Because the underlying problem is a POMDP, once this guidance decays to zero, the agent is thrust back into an unmitigated environment with severe state aliasing, causing catastrophic forgetting of the safe approximate policy.

Separately, the control community has developed anytime optimization methods that guarantee feasible solutions at any point during computation. The anytime-feasible MPC framework, referred to as REAP~\cite{Hosseinzadeh2023RobustTermination,amiri2025reap,amiri2025practical}, provides such guarantees through a modified barrier function and a primal–dual gradient flow, with solution quality improving monotonically as additional computation is allocated. In contrast, standard MPC offers no feasibility guarantees if terminated before solver convergence, making it unsuitable for online training under varying computational budgets~\cite{ren2022tutorial}. The anytime-feasibility property of REAP makes it a natural candidate for providing structured guidance to RL agents.

This paper proposes a general framework for planner-guided actor--critic RL under partial observability, called Privileged Planner-Guided RL (PriPG-RL); see Figure \ref{fig:PriPG-RL architecture}. The framework is defined by two agents with asymmetric information: i)~a \textit{`planner agent'} with access to an approximate dynamical model and privileged information, and ii)~a \textit{`learning agent'} that observes only a lossy projection of the true state and must function autonomously at deployment. The framework formally characterizes the informational asymmetry and provides mechanisms for the learning agent to extract behavioral priors from the planner agent, performing privileged information distillation, while ensuring the learned policy is not bounded by the planner agent's performance. We make two instantiations. As the planner agent, we develop a REAP-based framework that provides always-feasible guidance at controllable computational cost. As the learning agent, we propose Planner-to-Policy Soft Actor-Critic (P2P-SAC), which leverages the planner agent's signals and improves sample efficiency through four mechanisms. The system is validated in NVIDIA Isaac Lab and deployed on a Unitree Go2 quadruped navigating complex, obstacle-rich environments.

\section{Preliminaries and Problem Statement}\label{sec:ProblemStatement}

\subsection{Dynamical System and Observation Model}\label{sec:DynSys}
Consider the discrete-time dynamical system
\begin{align}\label{eq:TrueSystem}
    x_{t+1} = f(x_t, u_t), \quad t \in \mathbb{Z}_{\geq 0},
\end{align}
where $x_t \in \mathcal{X} \subseteq \mathbb{R}^n$ is the full state,
$u_t \in \mathcal{U} \subseteq \mathbb{R}^p$ is the control input, and
$f : \mathcal{X} \times \mathcal{U} \to \mathcal{X}$ is continuous but unknown dynamic of system. The full state $x_t$ may not be directly accessible to all agents. Let
$\mathcal{H} := \{ h : \mathcal{X} \to \mathcal{Y} \mid \mathcal{Y} \subseteq \mathbb{R}^{n_h},\; n_h \leq n \}$
denote the set of measurement maps. The learning agent observation map $h_s : \mathcal{X} \to \mathcal{S}$, $s_t = h_s(x_t) \in \mathcal{S} \subseteq \mathbb{R}^{n_s}$,
produces the observable state available to the learning agent; when $n_s < n$, the map is information-lossy, referred to as informational incompleteness. The planner agent observation map
$h_z : \mathcal{X} \to \mathcal{Z}$, $z_t = h_z(x_t) \in \mathcal{Z} \subseteq \mathbb{R}^{n_z}$,
produces the planner agent's state. In general, $h_s \neq h_z$, reflecting informational requirements.

 
\subsection{Partially Observable Markov Decision Process (POMDP)}\label{sec:MDP}
 
The closed-loop interaction of a policy with~\eqref{eq:TrueSystem}
under the lossy observation map $h_s$ induces a
POMDP
\begin{align}\label{eq:POMDP}
    \mathcal{P} = (\mathcal{X},\, \mathcal{U},\, f,\,
                   \mathcal{S},\, h_s,\, r,\, \gamma),
\end{align}
where $\mathcal{X}$ is the (hidden) state space,
$f$ is as in~\eqref{eq:TrueSystem},
$\mathcal{S}$ is the observation space,
$h_s : \mathcal{X} \to \mathcal{S}$ is the (deterministic) emission map,
$r : \mathcal{X} \times \mathcal{U} \to \mathbb{R}$ is a bounded reward,
and $\gamma \in [0,1)$ is the discount factor.
When $h_s$ is not injective ($n_s < n$), the observation $s_t$
does not determine the latent state $x_t$, and the process
$\{s_t\}_{t \geq 0}$ is not Markov in general.

Optimizing $\mathcal{P}$ is computationally intractable because it requires history-dependent policies or belief states. However, consider standard RL employs reactive, memoryless policies $\pi_\theta : \mathcal{S} \to \Delta(\mathcal{U})$. This approach induces a surrogate MDP $\widehat{\mathcal{M}} = (\mathcal{S}, \mathcal{U}, \widehat{P}, r, \gamma)$, where the transition kernel marginalizes true dynamics over unobservable latent states $x \in \mathcal{X}$ via the stationary conditional distribution $b_\pi(x \mid s)$:
\begin{align}\label{eq:SurrogateKernel}
    \widehat{P}(s' \mid s, u) = \int_{\mathcal{X}} \mathds{1}\!\big[h_s(f(x, u)) = s'\big]\; d\,b_\pi(x \mid s).
\end{align}
Because $b_\pi$ is shaped by $\pi$, the surrogate kernel $\widehat{P}$ is policy-dependent. As $\pi_\theta$ updates, the resulting non-stationarity in $\widehat{\mathcal{M}}$ violates standard Bellman convergence. Furthermore, state aliasing introduces irreducible epistemic variance into temporal difference targets, often leading to instability in critic function estimation and policy gradient collapse in continuous control POMDPs. Convergence is only theoretically guaranteed if latent states evolve independently, reducing the system to a stationary average MDP \cite{amiri2024convergence, amiri2025reinforcement}. That is why, in general, conventional RL algorithms such as SAC, PPO, and TD3 do not work properly in this case.

\subsection{Linear Approximate Model}\label{sec:LinearModel}
Although $f$ is unknown, a stabilizable LTI approximation is assumed available for planning on $z_t = h_z(x_t)$:
\begin{align}\label{eq:LinearModel}
    z_{t+1} = Az_t + Bu_t,
\end{align}
where $A \in \mathbb{R}^{n_z \times n_z}$, $B \in \mathbb{R}^{n_z \times p}$. This model may arise from linearization, system identification, or physics-based modeling. To robustify feasibility against the modeling residual $\delta_f(x_t, u_t) := h_z(f(x_t, u_t)) - (Az_t + Bu_t)$, the planner agent operates on tightened convex inner-approximations:
\begin{align}\label{eq:TightenedSets}
    \tilde{\mathcal{Z}} &= \{z \in \mathbb{R}^{n_z} : a_i^\top z + b_i \leq 0,\;
    i=1,\ldots,c_x\} \subseteq \mathcal{X}, \\
    \tilde{\mathcal{U}} &= \{u \in \mathbb{R}^p : c_i^\top u + d_i \leq 0,\;
    i=1,\ldots,c_u\} \subseteq \mathcal{U}.
\end{align}

\subsection{Model Predictive Control}\label{sec:MPC}
Let $d \in \mathbb{R}^m$ be a desired reference with steady-state configuration $(\bar{z}_d, \bar{u}_d)$ satisfying $\bar{z}_d = A\bar{z}_d + B\bar{u}_d$, $\bar{z}_d \in \operatorname{Int}(\tilde{\mathcal{Z}})$, $\bar{u}_d \in \operatorname{Int}(\tilde{\mathcal{U}})$. Given prediction horizon $N \in \mathbb{Z}_{>0}$, MPC computes the optimal control sequence $\mathbf{u}^\ast_t \in \mathbb{R}^{pN}$ as\begin{subequations}\label{eq:OptimizationProblemMain}
\begin{align}
\mathbf{u}^\ast_t := \min_{\mathbf{u}} \;& \sum_{\kappa=0}^{N-1}\left\|\hat{z}_{\kappa \mid t} - \bar{z}_d\right\|_{{\mathcal{L}_x}}^2 + \left\|u_{\kappa \mid t} - \bar{u}_d\right\|_{{\mathcal{L}_u}}^2 \nonumber \\
& + \left\|\hat{z}_{N \mid t} - \bar{z}_d\right\|_{{\mathcal{L}_N}}^2\label{eq:MPCCostFunction}
\end{align}subject to\begin{align}
& \hat{z}_{\kappa+1 \mid t} = A \hat{z}_{\kappa \mid t} + B u_{\kappa \mid t},\; \hat{z}_{0 \mid t} = z_t \label{eq:modelConstratins} \\
& \hat{z}_{\kappa \mid t} \in \tilde{\mathcal{Z}},\; u_{\kappa \mid t} \in \tilde{\mathcal{U}},\; \kappa \in \{0,\ldots,N{-}1\} \label{eq:Xconstraints} \\
& (\hat{z}_{N \mid t}, d) \in \Omega, \label{eq:Tconstraints}
\end{align}
\end{subequations}
where $\mathcal{L}_x \succeq 0$, $\mathcal{L}_u \succ 0$, $\mathcal{L}_N \succ 0$ are weighting matrices and $\Omega$ is the terminal constraint set. The computation of $\mathcal{L}_N$ and $\Omega$ is addressed in Section~3.6 of \cite{amiri2025reap}.

Constraints \eqref{eq:modelConstratins}--\eqref{eq:Tconstraints} can be rewritten as constraints on the control sequence $\mathbf{u}$ by recursively substituting the system dynamics~\eqref{eq:LinearModel}. This results in the compact polyhedral set
\begin{align}\label{eq:Constraintall}
\mathbb{U}=
\left\{
\mathbf{u}\in\mathbb{R}^{pN} :
\eta_i^\top\mathbf{u}+g_i\le0,\;
i=1,\ldots,\bar{c}
\right\},
\end{align}
where $\eta_i\in\mathbb{R}^{pN}$ and $g_i\in\mathbb{R}$ are constants obtained from the state, input, and terminal constraints, and $\bar{c}$ denotes the total number of resulting constraints. We introduce the set $\mathfrak{U}=\mathrm{Proj}_p(\mathbb{U})$ where $\mathrm{Proj}_p(\cdot)$ extracts the first $p$ elements.
 
\subsection{Soft Actor--Critic}\label{sec:SAC}
SAC~\cite{haarnoja2018soft} is a model-free, off-policy maximum-entropy algorithm originally designed for fully observable MDPs. When applied to the POMDP setting with reactive policies $\pi_\theta : \mathcal{S} \to \Delta(\mathcal{U})$, the critic functions $Q_{\phi_j}(s,u)$ are conditioned on observations rather than full states, facing the instabilities described in Section~\ref{sec:MDP}. SAC maximizes
\begin{align}\label{eq:RLObjective}
    J(\pi_\theta) = \mathbb{E}_{\tau \sim \pi_\theta}\!\left[
        \textstyle\sum_{t=0}^{\infty}\gamma^t\Big(
            r_t + \alpha\,\mathcal{H}(\pi_\theta(\cdot|s_t))
        \Big)
    \right],
\end{align}
where $\alpha \geq 0$ is the entropy temperature. SAC maintains twin critics $Q_{\phi_i}$, $i \in \{1,2\}$, minimizing the soft Bellman residual $L_Q(\phi_i) = \mathbb{E}_{(s,u,r,s') \sim \mathcal{D}}[(Q_{\phi_i}(s,u) - y)^2]$ with target $y = r + \gamma(\min_j Q_{\phi_{j,\mathrm{targ}}}(s',\tilde{u}') - \alpha\log\pi_\theta(\tilde{u}'|s'))$, $\tilde{u}' \sim \pi_\theta(\cdot|s')$. The actor minimizes
\begin{equation}\label{eq:SACactor}
    L_\pi(\theta) = \mathbb{E}_{s \sim \mathcal{D},\,
    \tilde{u} \sim \pi_\theta}\!\left[
        \alpha\log\pi_\theta(\tilde{u}|s)
        - \min_{j}Q_{\phi_j}(s,\tilde{u})
    \right].
\end{equation}
Despite these limitations in the POMDP setting, SAC provides a principled actor--critic foundation. The P2P-SAC algorithm proposed in Section~\ref{sec:P2P-SAC} builds on this foundation by incorporating planner agent guidance to overcome the challenges of partial observability.

\subsection{Problem Formulation}\label{sec:Formulation}

We now formalize the information structures and state the main problem.

\begin{definition}[Privileged Information]\label{def:PrivInfo}
    The privileged information set $\mathcal{I}_t$ is any task-relevant information available to the planner agent beyond $s_t$, such that $s_t$ is a deterministic function of $\mathcal{I}_t$. Typical elements include the full state $x_t$, constraint geometry, and environment parameters.
\end{definition}

\begin{definition}[Anytime-Feasible Planner Agent]\label{def:AnytimePlanner}
    An anytime-feasible planner agent is a deterministic mapping $\piE^{(t_{cpu})} : \mathcal{Z} \times \mathcal{I} \to \mathfrak{U}~$, parameterized by computation time $t_{cpu} \geq 0$, producing $u^{\dagger}_t = \piE^{(t_{cpu})}(z_t, \mathcal{I}_t)$. The policy strictly preserves feasibility, ensuring $\piE^{(t_{cpu})}(z_t, \mathcal{I}_t) \in \mathfrak{U}$ regardless of when computation terminates, and is asymptotically optimal, satisfying $\piE^{(t_{cpu})}(z_t, \mathcal{I}_t) \to u^\ast_t$ as $t_{cpu} \to \infty$, where $u^\ast_t$ is the first element of the sequence $\mathbf{u}_t^\ast$ defined in \eqref{eq:OptimizationProblemMain}. The suboptimality gap is defined as $\Delta^{t_{cpu}} := \|\piE^{(t_{cpu})}(z_t, \mathcal{I}_t) - u^\ast_t\|$.
\end{definition}


The planner agent $\piE^{(t_{cpu})}$ operates on $(z_t, \mathcal{I}_t)$ using~\eqref{eq:LinearModel}, while the learned policy $\pi_\theta$ operates exclusively on $s_t$ with no access to $z_t$, $\mathcal{I}_t$, or $(A,B)$ at deployment. This informational asymmetry is formalized below.

\begin{definition}[Informational Asymmetry Gap]\label{def:InfoGap}
    The informational asymmetry gap is defined as
    \begin{align}
        \mathcal{G} := \bigl\{
            s \in \mathcal{S}
            \;\big|\;
            &\exists\, x, x' \in h_s^{-1}(s),\;
            \exists\, \mathcal{I}, \mathcal{I}' \in \mathfrak{I} :
            \notag\\
            &\piE(h_z(x), \mathcal{I})
            \neq
            \piE(h_z(x'), \mathcal{I}')
        \bigr\},
    \end{align}
    where $h_s^{-1}(s) := \{x \in \mathcal{X} \mid h_s(x) = s\}$. States in $\mathcal{G}$ exhibit \emph{state aliasing}: identical observations $s$ map to distinct latent states and planner agent's actions.
\end{definition}

\begin{remark}[Privileged Information Distillation]\label{rem:distillation}
    The planner agent's action $u_t = \piE(z_t, \mathcal{I}_t)$ relies on privileged information strictly subsuming the learning agent's observation $s_t$. However, the learning agent distills this richer information to partially mitigate the partial observability. 
\end{remark}

\begin{problem}\label{prob:Main}
Given $\mathcal{P}$ as in \eqref{eq:POMDP}, the privileged information set $\mathcal{I}_t$, and an anytime-feasible planner agent $\piE^{(t_{cpu})}$ (Definition~\ref{def:AnytimePlanner}), find a reactive policy $\pi_{\theta^*} : \mathcal{S} \to \Delta(\mathcal{U})$ satisfying: (1) reactive optimality, such that $\pi_{\theta^*} = \arg\max_\theta J(\pi_\theta)$ among all reactive policies $\pi_\theta : \mathcal{S} \to \Delta(\mathcal{U})$; (2) deployment autonomy, where $\pi_{\theta^*}$ utilizes only $s_t$ at execution time without access to $\mathcal{I}_t$ and $\piE^{(t_{cpu})}$; and (3) training efficiency, whereby the sample complexity to achieve $J(\pi_{\theta^*}) - J(\pi_\theta) \leq \varepsilon$ is reduced by exploiting $\piE^{(t_{cpu})}$ during training.
\end{problem}
 
\begin{remark}
    Objective~(1) targets the best reactive policy,
    not the POMDP-optimal history-dependent policy.
    Since reactive policies cannot resolve state aliasing in $\mathcal{G}$, there is an inherent optimality gap relative to $\mathcal{P}$.
    Objective~(3) motivates using $\piE^{(t_{cpu})}$ as a training signal. A central challenge is that na\"{i}ve imitation may prevent the learned policy from surpassing the planner agent when $\mathcal{G} \neq \emptyset$, since the planner agent's actions in $\mathcal{G}$ depend on privileged information that no reactive policy on $\mathcal{S}$ can replicate.
    The proposed framework addresses this through the mechanisms in Section~\ref{sec:P2P-SAC}.
\end{remark}

\section{Anytime-Feasible MPC\\ (REAP-Based Planner Agent)}\label{sec:AMPC}

Inspiring from \cite{Hosseinzadeh2023RobustTermination,amiri2025reap}, we develop an anytime-feasible MPC-based planner agent $\piE^{(t_{cpu})}$ parameterized by a \emph{computation time} $t_{cpu} \geq 0$ to be used in the PriPG-RL framework, which will be introduced in the next section. To do so, we introduce our method for solving problem~\eqref{eq:OptimizationProblemMain} in real time. Consider the barrier function:
\begin{align}\label{eq:BarrierFunction}
\mathcal{B}(\mathbf{z}_t,d,\mathbf{u},\lambda)
=&J(\mathbf{z}_t,\mathbf{u},d)\\
&-\sum_{i=1}^{\bar{c}}
\lambda_i
\log\!\left(
-\beta(\eta_i^\top\mathbf{u}+g_i+\tfrac{1}{\omega})+1
\right),
\nonumber
\end{align}
where $J(\cdot)$ is the cost function defined in \eqref{eq:MPCCostFunction}, $\lambda=[\lambda_1,\cdots,\lambda_{\bar{c}}]^\top\in\mathbb{R}^{\bar{c}}$ is the vector of dual variables, and $\omega \in \mathbb{R}_{>0}$ is a tightening parameter chosen sufficiently large to avoid excessive conservatism, as discussed in~\cite{amiri2026dynamic}. The barrier function in \eqref{eq:BarrierFunction} is strongly convex in $\mathbf{u}$, since $J(\cdot)$ is strongly convex in $\mathbf{u}$, and therefore admits a unique global minimizer, denoted by $\mathbf{u}_t^\dagger$. Moreover, $\mathbf{u}_t^\dagger \to \mathbf{u}_t^\ast$ as $\omega \to \infty$, where $\mathbf{u}_t^\ast$ is the optimizer of \eqref{eq:OptimizationProblemMain}. The corresponding optimal dual variables at time $t$ are denoted by $\lambda_t^\dagger$. We reasonably assume that the linear independence constraint qualification holds at $\mathbf{u}_t^\dagger$, which implies that $\lambda_t^\dagger$ is unique.

The anytime-feasible MPC-based planner agent $\piE^{(t_{cpu})}$ evolves the following virtual continuous-time dynamical system based on a primal--dual gradient flow\begin{subequations}\label{eq:REAP}
\begin{align}
\frac{d}{d\rho}\hat{\mathbf{u}}_\rho
&=-\zeta
\nabla_{\hat{\mathbf{u}}}
\mathcal{B}\big(\mathbf{z}_t,d,\hat{\mathbf{u}}_\rho,\hat{\lambda}_\rho\big),
\\
\frac{d}{d\rho}\hat{\lambda}_\rho
&=\zeta\Big(
\nabla_{\hat{\lambda}}
\mathcal{B}\big(\mathbf{z}_t,d,\hat{\mathbf{u}}_\rho,\hat{\lambda}_\rho\big)
+\Psi_\rho
\Big),
\end{align}
\end{subequations}
where $\rho$ denotes the computation time within the sampling period $t$ and $\zeta>0$ is a design parameter determining the evolution speed of \eqref{eq:REAP}. The function $\Psi_\rho:\mathbb{R}^{\bar{c}} \to \mathbb{R}^{\bar{c}}$ is a projection operator that ensures the non-negativity of the dual variables $\lambda_i,~\forall i$; see \cite{Hosseinzadeh2023RobustTermination} for details.

Following \cite{Hosseinzadeh2023RobustTermination}, the trajectories of the dynamical system satisfy the following properties. Proofs are omitted due to space limitations and follow directly from the same steps.

\begin{proposition}
\label{theorem:Convergence}
Let $(\hat{\mathbf{u}}_\rho, \hat{\lambda}_\rho)$ be the trajectory of \eqref{eq:REAP}. Given a feasible initial condition $(\hat{\mathbf{u}}_0, \hat{\lambda}_0),~(\hat{\mathbf{u}}_\rho, \hat{\lambda}_\rho)$ exponentially converges to $\left(\mathbf{u}^{\dagger}_t, \lambda^{\dagger}_t\right)$ as $\rho \rightarrow \infty$.
\end{proposition}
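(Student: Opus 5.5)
The plan is a Lyapunov argument for the projected primal--dual gradient flow \eqref{eq:REAP}, in the spirit of \cite{Hosseinzadeh2023RobustTermination}.

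\textbf{Saddle point.} First I would characterize $(\mathbf{u}^\dagger_t,\lambda^\dagger_t)$ as a saddle point of $\mathcal{B}$ on the domain
\begin{align*}
\mathcal{D}_\beta=\bigl\{\mathbf{u}: -\beta(\eta_i^\top\mathbf{u}+g_i+\tfrac{1}{\omega})+1>0,\ \forall i\bigr\}\times\mathbb{R}^{\bar c}_{\ge0}.
\end{align*}
Since $J$ is strongly convex in $\mathbf{u}$ and $\lambda_i\ge0$, each term $-\lambda_i\log(\cdot)$ is convex in $\mathbf{u}$. Hence $\mathcal{B}$ is strongly convex in $\mathbf{u}$ with modulus at least $m>0$, given by the smallest eigenvalue of the Hessian of $J$. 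It is linear, hence concave, in $\lambda$. The KKT conditions, together with the assumed LICQ, give existence and uniqueness of the saddle point. The stationarity conditions are $\nabla_{\mathbf{u}}\mathcal{B}=0$ and complementary slackness, and the latter is exactly the equilibrium condition of the projected dual dynamics with $\Psi_\rho$.

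\textbf{Lyapunov decrease.} Next I would take
\begin{align*}
V(\hat{\mathbf{u}},\hat\lambda)=\tfrac{1}{2}\|\hat{\mathbf{u}}-\mathbf{u}^\dagger_t\|^2+\tfrac{1}{2}\|\hat\lambda-\lambda^\dagger_t\|^2.
\end{align*}
Using the standard projection inequality $(\hat\lambda-\lambda^\dagger_t)^\top\Psi_\rho\le0$, together with convexity in $\mathbf{u}$ and linearity in $\lambda$, I would obtain
\begin{align*}
\dot V\le -\zeta\bigl(\mathcal{B}(\hat{\mathbf{u}},\lambda^\dagger_t)-\mathcal{B}(\mathbf{u}^\dagger_t,\hat\lambda)\bigr)-\tfrac{\zeta m}{2}\|\hat{\mathbf{u}}-\mathbf{u}^\dagger_t\|^2\le -\tfrac{\zeta m}{2}\|\hat{\mathbf{u}}-\mathbf{u}^\dagger_t\|^2 .
\end{align*}
This yields Lyapunov stability and boundedness of trajectories. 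It also gives feasibility along the flow: starting feasible, $V$ is nonincreasing, while $\mathcal{B}$ blows up at the barrier boundary when $\lambda_i>0$. Combined with the gradient direction, the primal trajectory therefore cannot reach the boundary of $\mathcal{D}_\beta$, and hence stays in $\mathbb{U}$ (tightened by $1/\omega$).

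\textbf{Exponential rate (main obstacle).} The inequality above is only negative semidefinite in $\lambda$, so an extra step is needed. I would first try LaSalle to get asymptotic convergence, then upgrade to an exponential rate by adding a cross term,
\begin{align*}
V_\epsilon=V+\epsilon(\hat{\mathbf{u}}-\mathbf{u}^\dagger_t)^\top\nabla_{\mathbf{u}}\mathcal{B}(\hat{\mathbf{u}},\hat\lambda),
\end{align*}
following Qu--Li style arguments for augmented primal--dual flows. The key ingredients are as follows. LICQ gives full row rank of the active constraint gradients $\{\eta_i\}$, which makes the dual error observable through $\nabla_{\mathbf{u}}\mathcal{B}$. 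The barrier Hessian is bounded on the invariant sublevel set, which is compact and bounded away from the barrier singularity by the previous step. Together these yield $\dot V_\epsilon\le -cV_\epsilon$ for small $\epsilon>0$. Handling the projection $\Psi_\rho$ for inactive constraints, where $\lambda_i^\dagger=0$, is the delicate part. I would argue that after finite time these components hit zero and remain there, so that only the active set matters, which restores the local linear-algebra argument and gives exponential convergence.
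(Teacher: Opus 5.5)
Your saddle-point step and the decrease estimate $\dot V\le-\tfrac{\zeta m}{2}\|\hat{\mathbf u}-\mathbf u^\dagger_t\|^2$ are correct. This is the standard projected primal--dual Lyapunov argument, which the paper implicitly relies on by deferring to \cite{Hosseinzadeh2023RobustTermination} without giving a proof.

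The first gap is in what you build on top of it. Your LaSalle step, your Hessian bounds and $\dot V_\epsilon\le -cV_\epsilon$ all use a compact invariant set lying \emph{uniformly} away from $\partial\mathcal D_\beta$, and your feasibility paragraph does not establish this. The sublevel sets of $V$ are balls around $(\mathbf u^\dagger_t,\lambda^\dagger_t)$. They generally cross $\partial\mathcal D_\beta$, which lies only $1/(\beta\|\eta_i\|)$ from $\mathbf u^\dagger_t$ along each active face. $V$ does not control $\mathcal B$. The barrier term of constraint $i$ is also absent when $\hat\lambda_i=0$, which is the typical situation when $\hat{\mathbf u}$ first approaches that face. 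What keeps the flow in $\mathcal D_\beta$ is the dual equation: $\nabla_{\lambda_i}\mathcal B\to+\infty$ near the face, which forces $\hat\lambda_i>0$. That is the content of Proposition~\ref{theorem:Feasibility}, which you should invoke rather than rederive.

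Even so, never reaching the boundary does not give a uniform margin, so take the margin from convergence instead. Since $\eta_i^\top\mathbf u^\dagger_t+g_i+1/\omega\le0<1/\beta$, the point $\mathbf u^\dagger_t$ is interior to $\mathcal D_\beta$. You have $\int_0^\infty\|\hat{\mathbf u}_\rho-\mathbf u^\dagger_t\|^2\,d\rho<\infty$, and the vector field is bounded near $\mathbf u^\dagger_t$ because $\hat\lambda$ is bounded by $V$. An upcrossing (Barbalat-type) argument then gives $\hat{\mathbf u}_\rho\to\mathbf u^\dagger_t$. You run your cross-term argument only after a finite transient, which still yields a bound $Ke^{-c\rho}$ with $K$ depending on the initial condition.

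Separately, membership in $\mathcal D_\beta=\{\eta_i^\top\mathbf u+g_i<1/\beta-1/\omega\ \forall i\}$ does not imply the $1/\omega$-tightened constraints. It implies $\hat{\mathbf u}\in\mathbb U$ only when $\omega\le\beta$.

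The second gap is that your active-set step conflates ``$\lambda^\dagger_{t,i}=0$'' with ``inactive'', which amounts to assuming strict complementarity; only LICQ is assumed. Finite-time zeroing of $\hat\lambda_i$ is valid for strictly inactive constraints, $\eta_i^\top\mathbf u^\dagger_t+g_i+1/\omega<0$, because there $\nabla_{\lambda_i}\mathcal B$ is uniformly negative near $\mathbf u^\dagger_t$. It fails for weakly active constraints (active with $\lambda^\dagger_{t,i}=0$). For these, $\nabla_{\lambda_i}\mathcal B\to0$ and $\Psi_\rho$ may switch indefinitely, so you cannot reduce to a fixed active set.

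The repair fits inside your construction:
\begin{itemize}
\item Keep every constraint active at $\mathbf u^\dagger_t$ in the LICQ bound $\|\nabla_{\mathbf u}\mathcal B\|^2\ge c_1\|\hat\lambda-\lambda^\dagger_t\|^2-c_2\|\hat{\mathbf u}-\mathbf u^\dagger_t\|^2$.
\item Note that $\nabla_{\lambda_i}\mathcal B$ depends only on $\hat{\mathbf u}$ and vanishes at $\mathbf u^\dagger_t$ for these constraints. Hence, in every projection mode, $|\tfrac{d}{d\rho}\hat\lambda_i|\le\zeta|\nabla_{\lambda_i}\mathcal B|=O(\|\hat{\mathbf u}_\rho-\mathbf u^\dagger_t\|)$.
\end{itemize}
That estimate is all the derivative of your cross term needs, so active-set identification is required only for the strictly inactive constraints. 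With these two repairs your route goes through.
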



\begin{proposition}
\label{theorem:Feasibility}
Let $(\hat{\mathbf{u}}_\rho, \hat{\lambda}_\rho)$ be the solution of \eqref{eq:REAP}. Given a feasible initial condition $(\hat{\mathbf{u}}_0, \hat{\lambda}_0)$, $\hat{\mathbf{u}}_\rho$ satisfies constraints  \eqref{eq:Constraintall}  for all $\rho$, i.e., $\hat{\mathbf{u}}_\rho\in\mathbb{U}$ for all $\rho$.
\end{proposition}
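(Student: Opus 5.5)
The plan is to argue by invariance: the barrier term in \eqref{eq:BarrierFunction} acts as a wall that the primal trajectory $\hat{\mathbf{u}}_\rho$ cannot reach in finite computation time. Define for each constraint the slack-type quantity
\begin{align*}
\sigma_i(\rho) := -\beta\big(\eta_i^\top\hat{\mathbf{u}}_\rho+g_i+\tfrac{1}{\omega}\big)+1,
\end{align*}
so that $\mathcal{B}$ is finite exactly on the open set $\mathcal{D}:=\{\mathbf{u}: \sigma_i>0\ \forall i\}$. Because $\omega>0$, any $\mathbf{u}$ with $\eta_i^\top\mathbf{u}+g_i\le -1/\omega$ lies in $\mathcal{D}$, and a feasible initial condition gives $\sigma_i(0)>0$. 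I would first fix the notion of a feasible initial condition as $\hat{\mathbf{u}}_0\in\mathbb{U}$, with the tightened margin guaranteed, for instance, by warm-starting from the shifted previous solution, and $\hat{\lambda}_0\ge 0$. Then I would show that $\mathcal{D}$ is forward invariant under \eqref{eq:REAP} and that, by the choice of $\beta$ and $\omega$ as in \cite{Hosseinzadeh2023RobustTermination}, the relevant sublevel region of $\mathcal{D}$ is contained in $\mathbb{U}$.

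\textbf{Step 1 (well-posedness).} On $\mathcal{D}\times\mathbb{R}^{\bar c}_{\ge0}$ the vector field is locally Lipschitz. The projection $\Psi_\rho$ only clips the dual variables at zero, so it keeps $\hat\lambda_\rho\ge0$, and standard projected-dynamics results give a unique Carathéodory solution up to the exit time from $\mathcal{D}$.

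\textbf{Step 2 (boundedness).} Take the Lyapunov function used to establish Proposition~\ref{theorem:Convergence},
\begin{align*}
V(\hat{\mathbf{u}},\hat\lambda)=\tfrac{1}{2}\|\hat{\mathbf{u}}-\mathbf{u}^\dagger_t\|^2+\tfrac{1}{2}\|\hat\lambda-\lambda^\dagger_t\|^2 .
\end{align*}
Since $\mathcal{B}$ is convex in $\mathbf{u}$ and linear in $\lambda$, and the projection is nonexpansive, this yields $\dot V\le0$. Consequently the trajectory stays in the compact sublevel set $\{V\le V(0)\}$ for all $\rho$ before exit.

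\textbf{Step 3 (no boundary crossing).} Suppose some $\sigma_i(\rho)\to0^+$ as $\rho\to\bar\rho$. If $\hat\lambda_i$ stays bounded away from zero near $\bar\rho$, then $\nabla_{\mathbf{u}}\mathcal{B}$ contains the term $\hat\lambda_i\beta\eta_i/\sigma_i$, which blows up. The primal flow therefore acquires a component $-\zeta\hat\lambda_i\beta\eta_i/\sigma_i$, and this gives $\dot\sigma_i=\beta\zeta\hat\lambda_i\beta\|\eta_i\|^2/\sigma_i+\text{(bounded terms)}>0$, so $\sigma_i$ is repelled from $0$. If instead $\hat\lambda_i\to0$, then the dual dynamics give $\dot{\hat\lambda}_i=\zeta(-\log\sigma_i+\Psi)\to+\infty$ while $\sigma_i<1$. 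Hence $\hat\lambda_i$ cannot remain near zero, and we are returned to the first case. Combining the two cases with Step~2 rules out finite-time exit, so the solution exists for all $\rho$ with $\sigma_i(\rho)>0$.

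\textbf{Step 4 (from $\mathcal{D}$ to $\mathbb{U}$).} We still need $\eta_i^\top\hat{\mathbf{u}}_\rho+g_i\le0$, whereas $\mathcal{D}$ only guarantees $\eta_i^\top\mathbf{u}+g_i<1/\beta-1/\omega$. Choosing $\beta\ge\omega$ makes $\mathcal{D}\subseteq\mathbb{U}$ directly. Otherwise, I would follow \cite{Hosseinzadeh2023RobustTermination} and use the $V$-sublevel set together with the $1/\omega$ tightening to keep each $\sigma_i\ge1-\beta/\omega$ along trajectories.

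I expect \textbf{Step~3} to be the main obstacle. The coupling between a vanishing multiplier and a vanishing slack requires a careful two-case argument: the barrier's repelling force is weighted by $\hat\lambda_i$, so it can be weak exactly when it is needed. My first attempt would be a combined function $W=V-\sum_i c\log\sigma_i$, showing $\dot W$ is bounded above on compact sets so that $W$ cannot diverge in finite time.
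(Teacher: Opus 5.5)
\textbf{There is a genuine gap, and it sits where Steps~3 and~4 meet.} The paper gives no argument of its own here; it defers to the REAP analysis of \cite{Hosseinzadeh2023RobustTermination}. So your proposal has to stand on its own. Steps~1--2 are fine: the saddle point of $\mathcal B$ on $\mathcal D\times\mathbb R^{\bar c}_{\ge0}$ is the KKT pair of the $1/\omega$-tightened problem, so $\dot V\le0$.

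The problem is that your two parameter requirements are incompatible. The claim that $\hat{\mathbf u}_0\in\mathbb U$ gives $\sigma_i(0)>0$ holds for every feasible $\hat{\mathbf u}_0$ only if $\omega>\beta$, since an active constraint gives $\sigma_i(0)=1-\beta/\omega$. On the other hand, $\mathcal D=\{\mathbf u:\eta_i^\top\mathbf u+g_i<1/\beta-1/\omega\ \forall i\}$ is guaranteed to lie in $\mathbb U$ only when $\beta\ge\omega$.

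Consider the regime $\omega>\beta$, which is the one suggested by ``$\omega$ sufficiently large'' and by $\mathbf u_t^\dagger\to\mathbf u_t^\ast$.
\begin{itemize}
\item Step~3 then yields only $\eta_i^\top\hat{\mathbf u}_\rho+g_i<1/\beta-1/\omega$, whose right-hand side is positive.
\item Your fallback, ``keep each $\sigma_i\ge 1-\beta/\omega$'', is literally the inequality $\eta_i^\top\hat{\mathbf u}_\rho+g_i\le 0$ that you are trying to prove.
\item The $V$-sublevel set cannot supply it: it is a ball about $(\mathbf u_t^\dagger,\lambda_t^\dagger)$ whose radius is fixed by the initial condition, not by $\mathbb U$.
\end{itemize}

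In fact the conclusion is false in that regime under your hypotheses. Take $J(u)=(u-10)^2$, the single constraint $u\le 0$, and $(\hat u_0,\hat\lambda_0)=(0,0)$. Then $\sigma(0)=1-\beta/\omega>0$ and $\frac{d}{d\rho}\hat u_\rho|_{\rho=0}=20\zeta>0$, so $\hat u_\rho\notin\mathbb U$ for small $\rho>0$. Starting from the tightened point $\hat u_0=-1/\omega$ does not help when $\omega\gg\beta$. While $\hat u\in[-1/\omega,0]$, $\hat\lambda$ grows at rate $-\zeta\log\sigma=O(\zeta\beta/\omega)$, and $\hat u$ crosses the gap $1/\omega$ at speed about $20\zeta$. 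So the barrier force $\hat\lambda\beta/\sigma$ is only $O(\beta^2/\omega^2)$ when $\hat u$ reaches $0$.

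What is missing is therefore the right hypothesis, not a sharper estimate. One option is to assume $\beta\ge\omega$ and read ``feasible initial condition'' as $\hat{\mathbf u}_0\in\mathcal D$, i.e.\ strict feasibility with margin $1/\omega-1/\beta$. Then the proposition is exactly forward invariance of $\mathcal D$, and Step~4 is immediate. The other option is to impose and propagate extra conditions, for example on $\hat\lambda_0$, that rule out the example above.

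\textbf{Step~3 does not close as written, even once the hypothesis is fixed.} The ``bounded terms'' in $\dot\sigma_i$ include $\beta^2\zeta\hat\lambda_j\eta_i^\top\eta_j/\sigma_j$ for every other constraint $j$. If $j$ is also near-active and $\eta_i^\top\eta_j<0$ (an acute corner of $\mathcal D$), this term is unbounded and drives $\sigma_i$ toward $0$. So a constraint-by-constraint two-case argument fails. Your proposed $W=V-c\sum_i\log\sigma_i$ has the same defect: its derivative contains $-c\beta^2\zeta(\sum_i\eta_i/\sigma_i)^\top(\sum_j\hat\lambda_j\eta_j/\sigma_j)$, which is indefinite when the multipliers differ.

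A workable route is to weight by the multipliers.
\begin{itemize}
\item While $\sigma_i<1$ you have $\dot{\hat\lambda}_i=-\zeta\log\sigma_i>0$, so on the final approach to a putative exit time each near-active multiplier is positive and nondecreasing, hence bounded below.
\item Let $A$ be the near-active set and $\Phi=-\sum_{i\in A}\hat\lambda_i\log\sigma_i$. With $v=\sum_{i\in A}\hat\lambda_i\eta_i/\sigma_i$, one gets $\dot\Phi=\zeta\sum_{i\in A}(\log\sigma_i)^2-\beta^2\zeta\|v\|^2+O(\|v\|)$.
\item Gordan's theorem says the normals active at a boundary point of a polyhedron with nonempty interior are positively independent. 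This gives $\|v\|\ge\kappa\sum_{i\in A}\hat\lambda_i/\sigma_i$.
\end{itemize}
Hence $\dot\Phi<0$ once some $\sigma_i$ is small, which contradicts $\Phi\to\infty$ at the exit.
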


Regarding Propositions \ref{theorem:Convergence} and \ref{theorem:Feasibility}, the dynamical system \eqref{eq:REAP} ensures the resulting solution is feasible, yet suboptimal, while allowing for adjustable computational time $t_{cpu}$. Consequently, the balance between suboptimality and speed can be tuned, offering an adaptable solution for use in the PriPG-RL framework as the planner agent. This adaptability allows \eqref{eq:REAP} to effectively handle limited and varying computational resources, while maintaining feasibility and achieving control objectives. These properties could potentially help the RL to reduce optimality in early training, guide early exploration, and enhance learning efficiency. Leveraging the anytime feasibility guaranteed by Proposition \ref{theorem:Feasibility}, the evolution of the continuous-time dynamical system \eqref{eq:REAP} can be safely terminated at any point, typically dictated by a pre-defined computational time budget $t_{cpu}$. The first element of the resulting control sequence  $\hat{\mathbf{u}}_\rho$ at termination is then extracted and utilized as the planner agent $\piE^{(t_{cpu})}$. 


\section{P2P-SAC: Planner-to-Policy Soft Actor-Critic Reinforcement Learning}\label{sec:P2P-SAC}

We propose the P2P-SAC algorithm as a specific instantiation of $\piT : \mathcal{S} \to \Delta(\mathcal{U})$ that addresses all three objectives in Problem~\ref{prob:Main}. The REAP-based framework of Section~\ref{sec:AMPC} serves as the planner agent $\piE^{(t_{cpu})}$ (Definition~\ref{def:AnytimePlanner}), producing ${u}^{\dagger}_t \in \mathfrak{U}$ at any budget $t_{cpu}$ using $(z_t, \mathcal{I}_t)$ and available only during training. The learned policy $\piT$ operates exclusively on $s_t = h_s(x_t)$ and requires no access to $z_t$, $\mathcal{I}_t$, or $(A,B)$ at deployment.

P2P-SAC couples four mechanisms to exploit $\piE^{(t_{cpu})}$ without bounding the asymptotic performance of $\piT$: (i)~a dual replay buffer, (ii)~a deterministic three-phase maturity schedule, (iii)~a logit-space imitation anchor, and (iv)~an advantage-based sigmoid gate.

\subsection{Dual Replay Buffer}\label{sec:DualBuffer}

Unlike prior RLfD methods that rely on fixed, pre-collected demonstrations~\cite{hester2018deep,vecerik2018leveraging,nair2018overcoming}, P2P-SAC generates its planner buffer online via behavioral substitution, ensuring it reflects the closed-loop dynamics of~\eqref{eq:TrueSystem} under $\piE^{(t_{cpu})}$.

\begin{definition}[Dual Replay Buffer]\label{def:DualBuffer}
    Given capacities $C_P < C$, the dual replay buffer $\mathcal{D} = (\mathcal{D}_P, \mathcal{D}_O)$ comprises: (1)~a write-once planner agent's buffer $\mathcal{D}_P$ of capacity $C_P$ that freezes transitions collected when $M_t=0$ (Subsection~\ref{sec:AdvantageGate}), and (2)~a standard FIFO online buffer $\mathcal{D}_O$ of capacity $C - C_P$. Each stored transition is an augmented tuple $(s_t, u_t, r_t, s_{t+1}, {u}^{\dagger}_t, h_t)$, where $h_t \in \{0,1\}$ indicates whether a valid planner agent's action was queried.
\end{definition}

During the immature phase ($M_t = 0$, Definition~\ref{def:Schedule}), the planner agent's action replaces the executed input via:
\begin{align}\label{eq:BehaviorSubstitution}
    u_t =
    \begin{cases}
        {u}^{\dagger}_t, & \text{if } M_t = 0 \text{ and } h_t = 1, \\
        \tilde{u}_t, & \text{otherwise},
    \end{cases}
\end{align}
where $\tilde{u}_t \sim \piT(\cdot \mid s_t)$. At each gradient step, a mini-batch $\mathcal{B}$ is assembled as an equal-weight mixture:
\begin{align}\label{eq:MixedSampling}
    \mathcal{B} = \mathcal{B}_P \cup \mathcal{B}_O,
    \quad
    &\mathcal{B}_P \sim \mathrm{Uniform}(\mathcal{D}_P,\, B_P), \notag \\
    & \mathcal{B}_O \sim \mathrm{Uniform}(\mathcal{D}_O,\, B_O),
\end{align}
with $B_P = \lfloor B/2 \rfloor$, drawing entirely from the non-empty buffer if either is empty.

\subsection{Deterministic Three-Phase Maturity Schedule}\label{sec:schedule}

In the recent work~\cite {beikmohammadi2024accelerating}, annealing schedules decay the guidance weight to zero, extinguishing the signal regardless of whether the critic function is reliable or the planner agent remains superior. Once this guidance is completely removed, the method reverts to standard RL, where the restricted observation space fails to form a valid MDP, exposing the agent to the unmitigated state aliasing of the underlying POMDP. P2P-SAC instead employs a deterministic schedule parameterized by plateau horizon $T_p$, annealing horizon $T_d$, and guidance coefficients $\beta_0, \beta_f > 0$.

\begin{definition}[Three-Phase Maturity Schedule]\label{def:Schedule}
    The guidance positive coefficient $\beta_t \in [\beta_f, \beta_0]$ and maturity indicator $M_t \in \{0, 1\}$ evolve sequentially. During the plateau phase ($0 \leq t \leq T_p$), the agent is immature ($M_t = 0$) with $\beta_t = \beta_0$, keeping behavioral substitution~\eqref{eq:BehaviorSubstitution} active and routing transitions to $\mathcal{D}_P$. During the annealing phase ($T_p < t \leq T_p + T_d$), $M_t$ remains $0$ while the coefficient decays via $\beta_t = \beta_0 - \tfrac{t - T_p}{T_d}(\beta_0 - \beta_f)$. Finally, in the maturity phase ($t > T_p + T_d$), $\beta_t = \beta_f$ and $M_t = 1$. 
\end{definition}

This absorbing maturity state avoids cyclic deadlocks, deactivates substitution, grants $\piT$ full autonomy, and routes transitions to $\mathcal{D}_O$. By maintaining a non-zero final guidance positive coefficient $\beta_f$ alongside the advantage gate, P2P-SAC prevents the catastrophic return to unmitigated partial observability, ensuring the agent remains robust to state aliasing even after reaching maturity.

\subsection{Logit-Space Imitation Anchor}\label{sec:LogitAnchor}

The learning agent $\piT$ uses a squashed Gaussian actor: $\tilde{u} = \tanh(\mu_\theta(s) + \epsilon)$, where $\mu_\theta(s) \in \mathbb{R}^p$ is the pre-activation mean (logit) and $\epsilon \sim \mathcal{N}(0, \sigma_\theta(s)^2 I)$. Any imitation loss on the squashed output $\tilde{u}$ has gradient scaled by $(1 - \tanh^2(\mu_\theta(s)))$, which vanishes exponentially as $\|\mu_\theta(s)\|_\infty$ grows. Since planner agent's actions near $\partial\mathfrak{U}$ correspond to large logits, output-space losses~\cite{beikmohammadi2024accelerating,hester2018deep,vecerik2018leveraging} produce near-zero gradients at safety-critical operating points. However, P2P-SAC resolves this by anchoring in logit space. Given $u^\dagger = \piE^{(t_{cpu})}(z_t, \mathcal{I}_t) \in \mathfrak{U}$ with bounds $[u_{\mathrm{low}}, u_{\mathrm{high}}]^p$:

\begin{definition}[Planner Agent's Logit]\label{def:PlannerLogit}
    For a planner agent's action $u^\dagger \in \mathfrak{U}$ and numerical margin $\varepsilon \in (0, 1)$, the planner agent's logit $\xi^\dagger$ is derived via the following component-wise operations:
    \begin{align}\label{eq:PlannerLogit}
        \xi^\dagger \!\!= \mathrm{tanh^{-1}}\!\!\left(\mathrm{clip}\!\left(\left(2\,\frac{u^\dagger - u_{\mathrm{low}}}{u_{\mathrm{high}} - u_{\mathrm{low}}} \!-\! \mathbf{1}\right),\; \!\!-L,\;\!\! L\right)\right) \in \mathbb{R}^p, 
    \end{align}
    where $L=(1{-}\varepsilon)$, and $\varepsilon$ ensures $\xi^\dagger$ remains finite near the boundary $\partial\mathfrak{U}$.
\end{definition}

The per-sample logit-space imitation loss is
\begin{align}\label{eq:ImitationLoss}
    \ell(s,\, u^\dagger_t)
    = \frac{1}{p}\,\bigl\|\mu_\theta(s) - \xi^\dagger\bigr\|_2^2,
\end{align}
whose gradient $\nabla_{\mu_\theta}\ell = \frac{2}{p}(\mu_\theta(s) - \xi^\dagger)$ is bounded away from zero whenever $\mu_\theta(s) \neq \xi^\dagger$, regardless of the logit magnitude. This loss serves as a surrogate for $D_{\mathrm{KL}}(\piE^{(t_{cpu})} \| \piT)$: since $\piE^{(t_{cpu})}$ is deterministic, the forward KL reduces to $-\log \piT(u^\dagger \mid s)$, which for a Gaussian actor in logit space is equivalent to~\eqref{eq:ImitationLoss} up to variance terms.

\subsection{Advantage-Based Sigmoid Gate}\label{sec:AdvantageGate}

Applying~\eqref{eq:ImitationLoss} uniformly risks preventing $\piT$ from surpassing $\piE^{(t_{cpu})}$, particularly in the informational asymmetry gap $\mathcal{G}$ (Definition~\ref{def:InfoGap}). Conversely, entirely disabling guidance as in prior study~\cite{beikmohammadi2024accelerating} discards critical signals in which the planner agent remains superior, forcing the agent to revert to standard RL within a restricted observation space that fails to form a valid MDP. This typically leads to catastrophic forgetting and a return to the underlying POMDP's state aliasing. P2P-SAC resolves this via a value-based gate that selectively maintains guidance in aliased states where the planner agent's privileged advantage persists, using the estimated soft state value and planner agent advantage:
\begin{align}
    \widehat{V}(s)
    &= \min_{j \in \{1,2\}} Q_{\phi_j}(s, \tilde{u})
    - \alpha\,\log\piT(\tilde{u} \mid s),
    \label{eq:SoftValue}\\
    \widehat{A}^\dagger(s, u^\dagger)
    &= \min_{j \in \{1,2\}} Q_{\phi_j}(s, u^\dagger)
    - \widehat{V}(s),
    \label{eq:PlannerAdvantage}
\end{align}
where $\tilde{u} \sim \piT(\cdot \mid s)$, and $Q_{\phi_j}$ is the learned critic function conditioned on observations. The advantage gate maps $\widehat{A}^\dagger(s)$ to a soft weight via sigmoid with temperature $\tau_g > 0$:
\begin{align}\label{eq:SigmoidGate}
    m_\phi(s, u^\dagger) = \sigma\!\left(\frac{\widehat{A}^\dagger(s, u^\dagger)}{\tau_g}\right)
    = \frac{1}{1 + \exp\!\left(-\widehat{A}^\dagger(s, u^\dagger)/\tau_g\right)}.
\end{align}
Combining with the maturity indicator yields the composite gating function:
\begin{align}\label{eq:CompositeGate}
    G_\phi(s, u^\dagger;\, M_t)
    = (1 - M_t) + M_t \cdot m_\phi(s, u^\dagger).
\end{align}
In the immature regime ($M_t = 0$), $G_\phi \equiv 1$, applying the anchor uniformly since $Q_{\phi_j}$ is unreliable. In the mature regime ($M_t = 1$), $G_\phi = m_\phi(s, u^\dagger)$: the anchor is suppressed where $\piT$ dominates ($\widehat{A}^\dagger < 0$) and retained where the planner agent is superior. In states $s \in \mathcal{G}$, the gate converges toward $0.5$, asymptotically removing the imitation bias where it is least justified.

\subsection{Composite Actor Objective}\label{sec:ActorObjective}

The actor loss combines the SAC objective~\eqref{eq:SACactor} with the gated anchor:
\begin{align}
    L_{\pi}(\theta) &= L_{\mathrm{SAC}}(\theta) + L_{\mathrm{anchor}}(\theta), \label{eq:TotalActorLoss}
\end{align}
\begin{align}
    L_{\mathrm{SAC}}(\theta)
    &= \mathbb{E}_{\substack{s \sim \mathcal{B},\,
                            \tilde{u} \sim \piT}}
    \!\left[
        \alpha\log\piT(\tilde{u} \mid s)
        - \min_{j} Q_{\phi_j}(s,\tilde{u})
    \right], \label{eq:SACLoss}\\
    L_{\mathrm{anchor}}(\theta)
    &= \mathbb{E}_{(s, u^\dagger, h)\sim \mathcal{B}}
    \!\left[
        \beta_t\,
        G_\phi(s, u^\dagger; M_t)\,
        h
    \right], \label{eq:AnchorLoss}
\end{align}
with $h \in \{0,1\}$ the planner-availability indicator (Definition~\ref{def:DualBuffer}). The product $\beta_t \cdot G_\phi(s, u^\dagger; M_t)$ is the effective guidance weight, encoding both the global training phase and local planner agent superiority. The critic functions $\phi_j$ are frozen during the actor update. The entropy temperature $\alpha$ is updated by minimizing $L_\alpha = \mathbb{E}_{\tilde{u}\sim\piT}[-\alpha(\log\piT(\tilde{u} \mid s) + \bar{\mathcal{H}})]$, and target networks are updated via Polyak averaging: $\phi_{j,\mathrm{targ}} \leftarrow \rho_{\mathrm{poly}}\,\phi_{j,\mathrm{targ}} + (1-\rho_{\mathrm{poly}})\,\phi_j$.

\subsection{Algorithm}\label{sec:AlgoDiscussion}
The P2P-SAC procedure is implemented in Algorithm~\ref{alg:p2psac}. The process begins (\textbf{Lines 3–5}) by querying both the planner agent and actor of the learning agent, selecting an action based on the maturity indicator ($M_t$), and storing the transition in the dual replay buffer. Next (\textbf{Line 6–11}), it samples a mixed batch of data to train the critic function networks. Following this (\textbf{Lines 12–13}), the algorithm computes the advantage gate ($G_\phi$) using a stop-gradient to evaluate the planner agent's usefulness without biasing the critic function's estimation. Finally (\textbf{Lines 14–16}), the actor ($\piT$) is updated using the gated loss, followed by standard updates to the temperature and target networks.

\begin{algorithm}[t]
\caption{P2P-SAC}
\label{alg:p2psac}
\begin{algorithmic}[1]
\Require $\piE^{(t_{cpu})}$; $(T_p, T_d, \beta_0, \beta_f)$; $\tau_g$; $(C_P, C)$; $\rho_{\mathrm{poly}}$; $\bar{\mathcal{H}}$; $B$
\State Initialize $\piT(\theta)$, $Q_{\phi_j}$, $Q_{\phi_{j,\mathrm{targ}}} \!\leftarrow\! Q_{\phi_j}$ for $j \!\in\! \{1,2\}$, $\alpha$, $\mathcal{D}\!=\!(\mathcal{D}_P, \mathcal{D}_O)$, $M_0\!\leftarrow\!0$
\For{$t = 0, 1, 2, \ldots$} 
    \State Collect ($s_t$, $u^\dagger_t$, $\tilde{u}_t$) and generate $u_t$ using ~\eqref{eq:BehaviorSubstitution}
    \State Execute $u_t$; observe $r_t$, $s_{t+1}$
    \State By Definition~\ref{def:Schedule} store $(s_t, u_t, r_t, s_{t+1}, u^\dagger_t)$ in $\mathcal{D}$ and update $\beta_t$, $M_t$ 
    \State Sample $\mathcal{B}$ as~\eqref{eq:MixedSampling}
    \For{$j \in \{1,2\}$} 
        \State $\tilde{u}' \sim \piT(\cdot \mid s')$;\; 
        \State $y \leftarrow r + \gamma\bigl(\min_{j'} Q_{\phi_{j'}}(s',\tilde{u}') - \alpha \log\piT(\tilde{u}'\mid s')\bigr)$
        \State $\phi_j \leftarrow \phi_j - \eta_\phi \,\nabla_{\phi_j}\tfrac{1}{|\mathcal{B}|}\!\sum_{\mathcal{B}}\!(Q_{\phi_j}(s,u) - y)^2$
    \EndFor
    \State Compute $\xi^\dagger$ via~\eqref{eq:PlannerLogit};\; $\ell(s,u^\dagger)$ via~\eqref{eq:ImitationLoss} 
    \State Compute $\widehat{V}$, $\widehat{A}^\dagger$, $m_\phi$, $G_\phi$ via~\eqref{eq:SoftValue}--\eqref{eq:CompositeGate}
    \State $\theta \leftarrow \theta - \eta_\theta\,\nabla_\theta\tfrac{1}{|\mathcal{B}|}\!\sum_{\mathcal{B}}\! L_\pi(\theta)$ \hfill\Comment{Eq.~\eqref{eq:TotalActorLoss}}
    \State $\alpha \leftarrow \alpha - \eta_\alpha\,\nabla_\alpha\tfrac{1}{|\mathcal{B}|}\!\sum_{\mathcal{B}}\!\bigl[-\alpha(\log\piT(\tilde{u}\mid s) + \bar{\mathcal{H}})\bigr]$
    \State $\phi_{j,\mathrm{targ}} \leftarrow \rho_{\mathrm{poly}}\,\phi_{j,\mathrm{targ}} + (1-\rho_{\mathrm{poly}})\,\phi_j$, \;$j \in \{1,2\}$
\EndFor
\end{algorithmic}
\end{algorithm}

\section{Framework Instantiation}\label{sec:instantiation}
This section establishes that (i)~the REAP-based planner agent discussed in Section \ref{sec:AMPC} satisfies
Definition~\ref{def:AnytimePlanner}, and (ii)~P2P-SAC optimizes a
planner-regularized reactive objective whose gradient is immune to the irreducible variance caused by state aliasing.

\begin{corollary}[REAP-Based Planner Agent]\label{lem:reap_valid}
By Propositions~\ref{theorem:Convergence} and~\ref{theorem:Feasibility}, the dynamical system defined in \eqref{eq:REAP} initialized at a feasible
$(\hat{\mathbf{u}}_0, \hat{\lambda}_0)$ satisfies
Definition~\ref{def:AnytimePlanner}.
\end{corollary}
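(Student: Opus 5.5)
The plan is to check the three requirements of Definition~\ref{def:AnytimePlanner} in turn: (a) the map is deterministic, (b) its output lies in $\mathfrak{U}$ for every budget, and (c) the output converges to $u^\ast_t$ as $t_{cpu}\to\infty$.

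For each $t_{cpu}\ge 0$, define $\piE^{(t_{cpu})}(z_t,\mathcal{I}_t)$ as the first $p$ components of $\hat{\mathbf{u}}_{\rho}$ at $\rho=t_{cpu}$. Here $(\hat{\mathbf{u}}_\rho,\hat{\lambda}_\rho)$ solves \eqref{eq:REAP}, with $\mathbf{z}_t$ built from $z_t$ and the constraint data $(\eta_i,g_i)$, $\Omega$ and $d$ taken from $\mathcal{I}_t$. The feasible initialization $(\hat{\mathbf{u}}_0,\hat{\lambda}_0)$ is chosen by a fixed rule, for instance a warm start or a known admissible sequence.

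\emph{Determinism.} I would argue that \eqref{eq:REAP} has a unique solution for the given initial condition. On the open feasible region the barrier gradient is locally Lipschitz, and the projection $\Psi_\rho$ is the standard one from \cite{Hosseinzadeh2023RobustTermination}, where well-posedness is established. Hence $\hat{\mathbf{u}}_{t_{cpu}}$ is a single-valued function of $(z_t,\mathcal{I}_t)$, and so is $\piE^{(t_{cpu})}$.

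\emph{Anytime feasibility.} Proposition~\ref{theorem:Feasibility} gives $\hat{\mathbf{u}}_\rho\in\mathbb{U}$ for all $\rho\ge0$, in particular at $\rho=t_{cpu}$. Since $\mathfrak{U}=\mathrm{Proj}_p(\mathbb{U})$, the first block $\mathrm{Proj}_p(\hat{\mathbf{u}}_{t_{cpu}})$ lies in $\mathfrak{U}$. This holds regardless of when the flow is stopped.

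\emph{Asymptotic optimality.} This is the step I expect to be the main obstacle. Proposition~\ref{theorem:Convergence} gives exponential convergence $\hat{\mathbf{u}}_\rho\to\mathbf{u}_t^\dagger$, and projection is continuous, so $\piE^{(t_{cpu})}\to \mathrm{Proj}_p(\mathbf{u}^\dagger_t)$. Moreover $\Delta^{t_{cpu}}\le \|\hat{\mathbf{u}}_{t_{cpu}}-\mathbf{u}_t^\ast\|$. However, the limit $\mathbf{u}_t^\dagger$ is the minimizer of the barrier problem, not of \eqref{eq:OptimizationProblemMain}. To reach $u^\ast_t$ I would use the triangle inequality
\[
\Delta^{t_{cpu}}\le \|\hat{\mathbf{u}}_{t_{cpu}}-\mathbf{u}^\dagger_t\|+\|\mathbf{u}^\dagger_t-\mathbf{u}^\ast_t\| .
\]
The first term decays exponentially in $t_{cpu}$. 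The second term vanishes as $\omega\to\infty$, by the property stated after \eqref{eq:BarrierFunction}. So the claim follows either in the joint limit $t_{cpu}\to\infty$, $\omega\to\infty$, or with $u^\ast_t$ read as the optimizer of the $\omega$-tightened problem, for which the two coincide.

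In writing the proof I would state this interpretation explicitly: for fixed $\omega$ the limit is $\mathrm{Proj}_p(\mathbf{u}^\dagger_t)$, which approximates $u^\ast_t$ arbitrarily well for large $\omega$. I would also use the LICQ assumption, together with the uniqueness of $\lambda^\dagger_t$ it implies, to justify that the equilibrium of the flow is the unique point targeted by Proposition~\ref{theorem:Convergence}.
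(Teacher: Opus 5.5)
Your proposal is correct and follows the paper's own route: the paper simply invokes Proposition~\ref{theorem:Feasibility} for anytime feasibility of the first block and Proposition~\ref{theorem:Convergence} for asymptotic optimality. Your determinism check, and your observation that the flow converges to the tightened-problem optimizer $\mathbf{u}^\dagger_t$ (so $u^\ast_t$ is recovered only as $\omega\to\infty$, via the remark after \eqref{eq:BarrierFunction}), make explicit what the paper leaves implicit. One refinement: Proposition~\ref{theorem:Convergence} gives no rate uniform in $\omega$, so your first option should be stated as the iterated limit $\lim_{\omega\to\infty}\lim_{t_{cpu}\to\infty}\Delta^{t_{cpu}}=0$ rather than as a joint limit.
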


We now characterize the actor gradient of P2P-SAC.
Let $\rho_{\mathcal{D}}(s)$ be the empirical marginal of
observations in $\mathcal{D}$ and
$\nu_{\mathcal{D}}(\xi^\dagger, u^\dagger \mid s)$ the empirical conditional of planner agent's logits and actions given~$s$.  Define the
buffer statistics (under stop-gradient):
\begin{align}
    \bar{m}_{\mathcal{D}}(s)
    &:= \E_{\nu_{\mathcal{D}}}\!\big[m_\phi(s, u^\dagger)\big],
    \label{eq:MeanGate}\\
    \tilde{\xi}_{\mathcal{D}}(s)
    &:= \E_{\nu_{\mathcal{D}}}\!\big[
        m_\phi(s, u^\dagger)\,\xi^\dagger\big]
        / \bar{m}_{\mathcal{D}}(s),
    \label{eq:WeightedLogit}\\
    \widetilde{\mathcal{V}}_{\mathcal{D}}(s)
    &:= \tfrac{1}{p}\!\Big(
        \E_{\nu_{\mathcal{D}}}\!\big[
            m_\phi(s, u^\dagger)\|\xi^\dagger\|^2\big]
        - \bar{m}_{\mathcal{D}}(s)\,
          \|\tilde{\xi}_{\mathcal{D}}(s)\|^2
    \Big),
    \label{eq:AliasingVar}
\end{align}
with the convention
$\tilde{\xi}_{\mathcal{D}}(s)\!=\!\mathbf{0}$,
$\widetilde{\mathcal{V}}_{\mathcal{D}}(s)\!=\!0$
when $\bar{m}_{\mathcal{D}}(s)\!=\!0$.

\begin{theorem}[Planner-Regularized Objective]\label{thm:P2P_objective}
In the mature phase ($M_t\!=\!1$, $h\!=\!1$),
\begin{align}\label{eq:GradEquiv}
    \nabla_\theta L_\pi(\theta)
    = \nabla_\theta L_{\mathrm{SAC}}(\theta)
    + \nabla_\theta R_{\mathrm{P2P}}(\theta),
\end{align}
where the planner agent's regularizer is
\begin{align}\label{eq:RegTerm}
    R_{\mathrm{P2P}}(\theta)
    := \tfrac{\beta_f}{p}\,
    \E_{s \sim \rho_{\mathcal{D}}}\!\big[
        \bar{m}_{\mathcal{D}}(s)\,
        \|\mu_\theta(s) - \tilde{\xi}_{\mathcal{D}}(s)\|^2
    \big].
\end{align}
The gate-weighted aliasing variance
$C = \beta_f\,\E_{s}[\widetilde{\mathcal{V}}_{\mathcal{D}}(s)]$
enters $L_\pi$ but is $\theta$-independent and absent
from~\eqref{eq:GradEquiv}.
\end{theorem}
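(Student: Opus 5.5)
We plan to prove Theorem~\ref{thm:P2P_objective} by a direct bias--variance decomposition of the anchor term. One caveat first: the display~\eqref{eq:AnchorLoss} as printed omits the per-sample imitation loss. We read it, consistently with Algorithm~\ref{alg:p2psac} (Lines~12--14) and Section~\ref{sec:LogitAnchor}, as
\[
L_{\mathrm{anchor}}(\theta)=\E_{(s,u^\dagger,h)\sim\mathcal{B}}\big[\beta_t\,G_\phi(s,u^\dagger;M_t)\,h\,\ell(s,u^\dagger)\big].
\]
We also treat the mini-batch expectation as the buffer expectation. In the mature phase we substitute $M_t=1$, $h=1$ and $\beta_t=\beta_f$ (Definition~\ref{def:Schedule}), so that $G_\phi=m_\phi$ by~\eqref{eq:CompositeGate}. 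Disintegrating the buffer law into the marginal $\rho_{\mathcal{D}}(s)$ and the conditional $\nu_{\mathcal{D}}(\xi^\dagger,u^\dagger\mid s)$ then gives
\[
L_{\mathrm{anchor}}=\tfrac{\beta_f}{p}\,\E_{s\sim\rho_{\mathcal{D}}}\E_{\nu_{\mathcal{D}}}\big[m_\phi(s,u^\dagger)\,\|\mu_\theta(s)-\xi^\dagger\|^2\big].
\]

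Next, fix $s$ and expand the square as $\|\mu_\theta\|^2-2\mu_\theta^\top\xi^\dagger+\|\xi^\dagger\|^2$. Since $\mu_\theta(s)$ does not depend on the inner integration variables, the inner expectation equals
\[
\bar m_{\mathcal{D}}\|\mu_\theta\|^2-2\mu_\theta^\top\E[m_\phi\xi^\dagger]+\E[m_\phi\|\xi^\dagger\|^2].
\]
When $\bar m_{\mathcal{D}}(s)>0$, we use~\eqref{eq:WeightedLogit} to write $\E[m_\phi\xi^\dagger]=\bar m_{\mathcal{D}}\tilde\xi_{\mathcal{D}}$ and complete the square:
\[
\bar m_{\mathcal{D}}\|\mu_\theta-\tilde\xi_{\mathcal{D}}\|^2+\Big(\E[m_\phi\|\xi^\dagger\|^2]-\bar m_{\mathcal{D}}\|\tilde\xi_{\mathcal{D}}\|^2\Big).
\]
The bracket is exactly $p\,\widetilde{\mathcal{V}}_{\mathcal{D}}(s)$ by~\eqref{eq:AliasingVar}. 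When $\bar m_{\mathcal{D}}(s)=0$, we use that $m_\phi=\sigma(\cdot)>0$ pointwise. Hence $m_\phi=0$ $\nu_{\mathcal{D}}$-a.s., the whole inner term vanishes, and this agrees with the stated convention. Integrating over $\rho_{\mathcal{D}}$ yields the decomposition $L_\pi=L_{\mathrm{SAC}}+R_{\mathrm{P2P}}+C$.

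Finally we differentiate in $\theta$, and this is the only delicate step. The key point is that $m_\phi$ is evaluated under stop-gradient (Algorithm~\ref{alg:p2psac}, Line~13), even though $\widehat V$ in~\eqref{eq:SoftValue} depends on $\piT$ and hence formally on $\theta$. We justify treating it as a constant by the stop-gradient convention stated in the buffer-statistics definitions. Under that convention $\bar m_{\mathcal{D}}$, $\tilde\xi_{\mathcal{D}}$ and $\widetilde{\mathcal{V}}_{\mathcal{D}}$ are $\theta$-independent. The quantities $\xi^\dagger$ are fixed functions of the stored $u^\dagger$ via~\eqref{eq:PlannerLogit}, and $\rho_{\mathcal{D}}$ is fixed by the buffer. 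Therefore $\nabla_\theta C=0$.

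The remaining gradients follow from linearity of expectation, with differentiation passed under the empirical (finite) sums. This gives~\eqref{eq:GradEquiv}. As a check, $\nabla_\theta R_{\mathrm{P2P}}=\tfrac{2\beta_f}{p}\E[\bar m_{\mathcal{D}}\,J_\theta\mu_\theta^\top(\mu_\theta-\tilde\xi_{\mathcal{D}})]$, which carries no $\widetilde{\mathcal{V}}_{\mathcal{D}}$ dependence. This is the sense in which the aliasing variance does not enter the gradient.
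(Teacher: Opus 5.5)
Your proposal is correct and takes essentially the same route as the paper. You condition on $s$, apply the weighted bias--variance identity with $a=\mu_\theta(s)$, $X=\xi^\dagger$, $w=m_\phi(s,u^\dagger)$ (proved inline by expanding and completing the square, as in the paper's footnote) to get $L_{\mathrm{anchor}}=R_{\mathrm{P2P}}+C$, and then use the stop-gradient on the gate to conclude $\nabla_\theta C=0$. Your reading of the anchor loss as containing $\ell(s,u^\dagger)$ is exactly what the paper's proof uses; in the $\bar m_{\mathcal{D}}(s)=0$ case the relevant fact is nonnegativity of $m_\phi$, whereas the strict positivity of $\sigma$ you cite actually makes that case vacuous, a harmless slip.
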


\begin{proof}
With $M_t\!=\!1$, $h\!=\!1$:
$L_{\mathrm{anchor}}(\theta)
= \tfrac{\beta_f}{p}\,
  \E_{\mathcal{D}}[m_\phi(s,u^\dagger)
  \|\mu_\theta(s)-\xi^\dagger\|^2]$.
Conditioning on $s$ and noting that $\mu_\theta(s)$ is constant
over $\nu_{\mathcal{D}}(\cdot\mid s)$, the weighted
bias--variance identity\footnote{$\E[w\|a-X\|^2]
= \bar{w}\|a-\tilde{X}\|^2
+ \E[w\|X\|^2] - \bar{w}\|\tilde{X}\|^2$
with $\bar{w}=\E[w]$, $\tilde{X}=\E[wX]/\bar{w}$.
Proof: expand $\|a-X\|^2$, substitute $\E[wX]=\bar{w}\tilde{X}$.}
with $a=\mu_\theta(s)$, $X=\xi^\dagger$,
$w=m_\phi(s,u^\dagger)$ gives
$L_{\mathrm{anchor}} = R_{\mathrm{P2P}}(\theta) + C$.
All quantities in $C$ are computed under stop-gradient and
independent of $\theta$, so
$\nabla_\theta L_{\mathrm{anchor}}
= \nabla_\theta R_{\mathrm{P2P}}$.
\end{proof}

\begin{remark}\label{rem:Interpretation}
Three consequences follow from Theorem~\ref{thm:P2P_objective}.
\emph{(a)~Privileged-information distillation:}
$R_{\mathrm{P2P}}$ pulls $\mu_\theta(s)$ toward
$\tilde{\xi}_{\mathcal{D}}(s)$, the gate-weighted average of
planner agent's logits across latent states that produced $s$ in the
buffer, injecting privileged information into a reactive policy.
\emph{(b)~Aliasing-immune gradient:}
the variance $\widetilde{\mathcal{V}}_{\mathcal{D}}(s)$, which
captures the irreducible ambiguity in aliased states
$s\!\in\!\mathcal{G}$, does not enter the policy gradient.
\emph{(c)~Bounded regularization cost:}
$\bar{m}_{\mathcal{D}}(s) \leq 1$ implies
$R_{\mathrm{P2P}}(\theta)
\leq \tfrac{\beta_f}{p}\sup_s
\|\mu_{\theta}(s) - \tilde{\xi}_{\mathcal{D}}(s)\|^2$
for any $\theta$, bounding the maximum penalty the regularizer can impose.
\end{remark}



\section{Simulation and Experimental Evaluation}\label{sec:sim_results}

We evaluate the framework on autonomous quadrupedal navigation, specifying all abstract quantities from Section~\ref{sec:ProblemStatement}.

\subsection{Platform and Observation Instantiation}

The platform is a Unitree Go2 quadruped. Following~\cite{lee2020learning,margolis2022rapid}, a frozen locomotion policy $\pi_{\mathrm{ll}}$~\cite{schwarke2025rsl} converts velocity commands to torques at 200\,Hz, while $\piT$ outputs $u_t = [v_x, v_y]^\top \in \mathfrak{U}$ at 50\,Hz. The observation maps are
\begin{align}
    s_t &= h_s(x_t) = [x_t^{\mathrm{rob}},\, y_t^{\mathrm{rob}},\, x^{\mathrm{goal}},\, y^{\mathrm{goal}}]^\top \in \mathbb{R}^4, \label{eq:sObs}\\
    z_t &= h_z(x_t) = [x_t^{\mathrm{rob}},\, y_t^{\mathrm{rob}}]^\top \in \mathbb{R}^2, \label{eq:zObs}
\end{align}
with goal $(x^{\mathrm{goal}}, y^{\mathrm{goal}}) = (0.0, 2.8)$\,m. Obstacle positions, heading, and joint quantities are excluded from $s_t$ (blind navigation~\cite{siekmann2021blind}), inducing informational incompleteness ($n_s < n$). The planner agent receives the privileged information $\mathcal{I}_t = \{(o_i, r_i)_{i=1}^{6}, (b_i)_{i=1}^{4}, x^{\mathrm{goal}}, y^{\mathrm{goal}}\}$ encoding obstacle and boundary geometry, never communicated to $\piT$, confirming $\mathcal{G} \neq \emptyset$. The planner agent's world-frame velocity is mapped to the body frame via $u_t^\dagger = [u_{y,w}^\dagger,\, -u_{x,w}^\dagger]^\top$ with $u_{low}=u_{high} = 0.5$ m/s. The linear model is a 2D single-integrator at 50\,Hz: $z_{k+1} = z_k + u_k \cdot 0.02$, and $\tilde{\mathcal{Z}}$ defined by linearized obstacle-avoidance halfspaces. By Corollary~\ref{lem:reap_valid}, REAP-based formulation in \eqref{eq:REAP} with $N=15$ and $\beta = 100$ satisfies Definition~\ref{def:AnytimePlanner}. Since obstacle positions and heading are excluded from $s_t$, the learning agent faces a POMDP (Section~\ref{sec:MDP}): multiple latent configurations $(x_t, \mathcal{I}_t)$ project to the same $s_t$, confirming $\mathcal{G} \neq \emptyset$. 

\subsection{Simulation Setup}
Training and evaluation use NVIDIA Isaac Lab~\cite{mittal2025isaac} with the
\texttt{Isaac-Velocity-Flat-Unitree-Go2-v0} task at $\Delta t_{\mathrm{ctrl}} = 0.02$\,s (50\,Hz).
The arena is $4.1 \times 5.6$\,m$^2$ ($x \in [-2.2, 2.0]$, $y \in [-2.0, 3.5]$\,m) with six
cylindrical obstacles of radius $0.23$\,m, arranged symmetrically: one at the entry
$(0.00, 0.15)$\,m, one at the centre $(0.00, 1.45)$\,m, and four flanking obstacles at
$(\pm1.30, 0.75)$\,m and $(\pm1.30, -0.45)$\,m. The same geometry is used identically in Isaac Lab and the REAP-based planner agent.
The robot spawns randomly in the lower half via rejection
sampling~\cite{gilks1992adaptive}. Episodes terminate on goal success ($< 0.3$\,m),
collision, fall (trunk $< 0.1$\,m), or timeout ($T_{\max} = 8{,}000$ steps). Five seeds
$\{0,\ldots,4\}$ per algorithm are run on the NVIDIA A40 GPU. To make the problem challenging for the algorithms, a sparse reward is defined as $r(u_t) = - c_{\mathrm{step}} + r_{\mathrm{mag}}(u_t)$, where $c_{\mathrm{step}} = 1.0$,
$r_{\mathrm{mag}} = -0.02\|u_t\|_2^2$, with terminal rewards $+100$ (goal) and
$-200$ (crash).

\subsection{Compared Algorithms and Hyperparameters}

\noindent\textbf{SAC}~\cite{haarnoja2018soft}: vanilla maximum-entropy actor--critic, without planner agent.
\noindent\textbf{PPO}~\cite{schulman2017proximal}: Standard on-policy
policy gradient with clip ratio $\epsilon = 0.2$, without planner agent.
\textbf{Accelerated SAC}~\cite{beikmohammadi2024accelerating}: output-space pseudo-label loss with plateau-then-decay schedule ($T_p = 10^5$, $T_d = 5 \times 10^4$, $\beta_0 = 10.0$); single buffer.
\textbf{P2P-SAC}: Algorithm~\ref{alg:p2psac} with $\beta_0 = \beta_f = 10.0$, $T_p = 10^5$, $T_d = 0$, $\tau_g = 1.0$, $C_P = 10^6$, $C = 2 \times 10^6$; REAP-based planner agent with $N = 15$, $\beta = 100$; agent'a action bounds $[-0.7, 0.7]^2$. All methods share the same architecture: two hidden layers of 256 units (ReLU), Adam with $\mathrm{lr} = 3 \times 10^{-4}$. Note that $T_d = 0$ collapses the annealing phase; the sole change at $t = T_p$ is activation of $m_\phi(s)$, isolating the gate's contribution.

\subsection{Evaluation Metrics}

Table.~\ref{tab:results} summarizes the evaluation metrics are computed over the last 10 episodes with different seeds on the trained policies: success rate, crash rate, path optimality $\ell_{\mathrm{ep}} / \|p^{\mathrm{goal}} - p^{\mathrm{spawn}}\|_2$, runtime, and average velocity. 

\subsection{Results and Discussion}

\subsubsection{Sample efficiency}
As it is shown in Fig.~\ref{fig:training_curves}, in the training, P2P-SAC achieves 100\% success after 1M steps, versus 40\% for Accelerated SAC. The vanilla SAC and PPO fail at this task because they operate in the POMDP. 

\subsubsection{Final performance}
The improvement of P2P-SAC over Accelerated SAC is attributable to two factors: the logit-space anchor provides non-vanishing gradients near $\partial\mathfrak{U}$, and the advantage gate preserves the imitation loss, and selectively suppresses imitation in $\mathcal{G}$ where the planner agent's privileged $\mathcal{I}_t$ confers an unreplicable advantage. In P2P-SAC, setting $T_p = 10^5$ enables the planner to collect high-quality trajectories right from the start of training. This immediate proficiency results in a 100\% success rate, as illustrated in Fig.~\ref{fig:training_curves}, and empirically demonstrates the anytime feasibility of the REAP \eqref{eq:REAP}.

\subsubsection{Advantage gate behaviour}
During the annealing phase, $G_\phi \equiv 1$ by~\eqref{eq:CompositeGate}. At maturation ($t = T_p$), $G_\phi$ drops to $\approx 0.1$ as the critic function initially estimates $\piT$ as superior, then stabilizes at $G_\phi \approx 0.45$, consistent with the prediction that $m_\phi(s, u^\dagger) \to 0.5$ in $\mathcal{G}$.

\subsubsection{Path quality}
The dual buffer ensures the critic function bootstraps from the planner agent's trajectories, yielding path optimality of $1.06$ versus $1.10$ for REAP.

\begin{table}[t]
    \centering
    \caption{Best-checkpoint metrics (mean $\pm$ std, 5 seeds).}
    \label{tab:results}
    \resizebox{\columnwidth}{!}{%
    \begin{tabular}{lccccc}
        \hline
        Algorithm & Success (\%) & Crash (\%) & Path opt.\ & Runtime (s) \ & Ave. Velocity (m/s) \\
        \hline
        Accel.\ SAC~\cite{beikmohammadi2024accelerating}
        & 35.0 $\pm$ 47.7& 65.0 $\pm$ 47.7 & 1.100 $\pm$ 0.073 & 9.0 $\pm$ 1.3 & 0.477 $\pm$ 0.045\\
        {P2P-SAC}
        & {100\%} & {0.0\%} & {1.060 $\pm$ 0.031} & {9.7 $\pm$ 1.1} & {0.352 $\pm$ 0.019}\\
        \hline
        REAP~\cite{Hosseinzadeh2023RobustTermination}
        & 100\% & 0.0\% & 1.10 $\pm$ 0.04 & 12.26 $\pm$ 1.29 & 0.353 $\pm$ 0.028\\
         \hline
        \hline
    \end{tabular}}
\end{table}

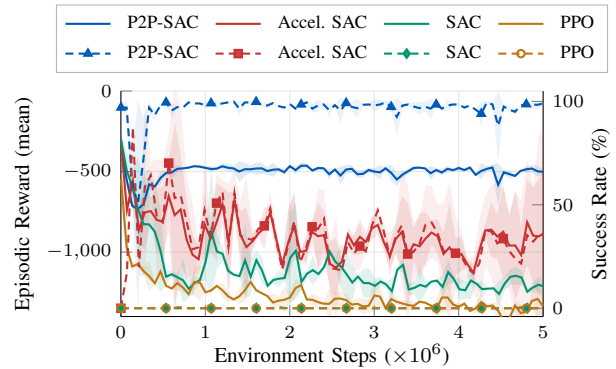
\begin{figure}[t]
    \centering
    \begin{tikzpicture}

    \begin{axis}[
        name=mainax,
        width=0.65\columnwidth,
        height=3cm,
        scale only axis,
        xlabel={Environment Steps ($\times 10^6$)},
        ylabel={Episodic Reward (mean)},
        xmin=0, xmax=5,
        ymin=-1400, ymax=0,
        axis y line*=left,
        axis x line*=bottom,
        ylabel style={yshift=-2pt},
        every axis x label/.style={at={(0.5,-0.08)}, anchor=north},
        legend style={
            at={(0.5,1.08)},
            anchor=south,
            font=\scriptsize,
            draw=gray!60,
            fill=white,
            fill opacity=0.9,
            text opacity=1,
            row sep=1pt,
            column sep=4pt,
            legend columns=4,
        },
        tick label style={font=\scriptsize},
        label style={font=\footnotesize},
        grid=major,
        grid style={gray!20},
        clip=true,
    ]

    \addplot[name path=p2p_rhi, draw=none, forget plot]
        table[x=steps_M, y=rew_hi, col sep=comma, row sep=newline]{results/data_p2psac.csv};
    \addplot[name path=p2p_rlo, draw=none, forget plot]
        table[x=steps_M, y=rew_lo, col sep=comma, row sep=newline]{results/data_p2psac.csv};
    \addplot[fill=cP2P, fill opacity=0.12, forget plot]
        fill between[of=p2p_rhi and p2p_rlo];
    \addplot[cP2P, thick, solid]
        table[x=steps_M, y=rew_mean, col sep=comma, row sep=newline]{results/data_p2psac.csv};
    \addlegendentry{P2P-SAC}

    \addplot[name path=ac_rhi, draw=none, forget plot]
        table[x=steps_M, y=rew_hi, col sep=comma, row sep=newline]{results/data_accelsac.csv};
    \addplot[name path=ac_rlo, draw=none, forget plot]
        table[x=steps_M, y=rew_lo, col sep=comma, row sep=newline]{results/data_accelsac.csv};
    \addplot[fill=cAccel, fill opacity=0.12, forget plot]
        fill between[of=ac_rhi and ac_rlo];
    \addplot[cAccel, thick, solid]
        table[x=steps_M, y=rew_mean, col sep=comma, row sep=newline]{results/data_accelsac.csv};
    \addlegendentry{Accel.\ SAC}

    \addplot[name path=sac_rhi, draw=none, forget plot]
        table[x=steps_M, y=rew_hi, col sep=comma, row sep=newline]{results/data_sac.csv};
    \addplot[name path=sac_rlo, draw=none, forget plot]
        table[x=steps_M, y=rew_lo, col sep=comma, row sep=newline]{results/data_sac.csv};
    \addplot[fill=cSAC, fill opacity=0.12, forget plot]
        fill between[of=sac_rhi and sac_rlo];
    \addplot[cSAC, thick, solid]
        table[x=steps_M, y=rew_mean, col sep=comma, row sep=newline]{results/data_sac.csv};
    \addlegendentry{SAC}

    \addplot[name path=ppo_rhi, draw=none, forget plot]
        table[x=steps_M, y=rew_hi, col sep=comma, row sep=newline]{results/data_ppo.csv};
    \addplot[name path=ppo_rlo, draw=none, forget plot]
        table[x=steps_M, y=rew_lo, col sep=comma, row sep=newline]{results/data_ppo.csv};
    \addplot[fill=cPPO, fill opacity=0.12, forget plot]
        fill between[of=ppo_rhi and ppo_rlo];
    \addplot[cPPO, thick, solid]
        table[x=steps_M, y=rew_mean, col sep=comma, row sep=newline]{results/data_ppo.csv};
    \addlegendentry{PPO}

    \addlegendimage{cP2P, thick, densely dashed, mark=triangle*, mark size=1.6pt,
        mark options={solid, fill=cP2P}}
    \addlegendentry{P2P-SAC}

    \addlegendimage{cAccel, thick, densely dashed, mark=square*, mark size=1.4pt,
        mark options={solid, fill=cAccel}}
    \addlegendentry{Accel.\ SAC}

    \addlegendimage{cSAC, thick, densely dashed, mark=diamond*, mark size=1.6pt,
        mark options={solid, fill=cSAC}}
    \addlegendentry{SAC}

    \addlegendimage{cPPO, thick, densely dashed, mark=o, mark size=1.4pt,
        mark options={solid, draw=cPPO, fill=white}}
    \addlegendentry{PPO}

    \end{axis}

    \begin{axis}[
        width=0.65\columnwidth,
        height=3cm,
        scale only axis,
        xmin=0, xmax=5,
        ymin=-4, ymax=105,
        axis y line*=right,
        axis x line=none,
        ylabel={Success Rate (\%)},
        ylabel style={yshift=4pt},
        tick label style={font=\scriptsize},
        label style={font=\footnotesize},
        clip=true,
    ]

    \addplot[name path=p2p_shi, draw=none, forget plot]
        table[x=steps_M, y=suc_hi, col sep=comma, row sep=newline]{results/data_p2psac.csv};
    \addplot[name path=p2p_slo, draw=none, forget plot]
        table[x=steps_M, y=suc_lo, col sep=comma, row sep=newline]{results/data_p2psac.csv};
    \addplot[fill=cP2P, fill opacity=0.08, forget plot]
        fill between[of=p2p_shi and p2p_slo];
    \addplot[cP2P, thick, densely dashed, mark=triangle*, mark size=1.6pt,
        mark repeat=8, mark options={solid, fill=cP2P}]
        table[x=steps_M, y=suc_mean, col sep=comma, row sep=newline]{results/data_p2psac.csv};

    \addplot[name path=ac_shi, draw=none, forget plot]
        table[x=steps_M, y=suc_hi, col sep=comma, row sep=newline]{results/data_accelsac.csv};
    \addplot[name path=ac_slo, draw=none, forget plot]
        table[x=steps_M, y=suc_lo, col sep=comma, row sep=newline]{results/data_accelsac.csv};
    \addplot[fill=cAccel, fill opacity=0.08, forget plot]
        fill between[of=ac_shi and ac_slo];
    \addplot[cAccel, thick, densely dashed, mark=square*, mark size=1.4pt,
        mark repeat=8, mark options={solid, fill=cAccel}]
        table[x=steps_M, y=suc_mean, col sep=comma, row sep=newline]{results/data_accelsac.csv};

    \addplot[cSAC, thick, densely dashed, mark=diamond*, mark size=1.6pt,
        mark repeat=8, mark options={solid, fill=cSAC}]
        table[x=steps_M, y=suc_mean, col sep=comma, row sep=newline]{results/data_sac.csv};

    \addplot[cPPO, thick, densely dashed, mark=o, mark size=1.4pt,
        mark repeat=8, mark options={solid, draw=cPPO, fill=white}]
        table[x=steps_M, y=suc_mean, col sep=comma, row sep=newline]{results/data_ppo.csv};

    \end{axis}

    \end{tikzpicture}
    \caption{Training curves: mean episodic reward (solid lines, left y-axis) and success rate (dashed lines with markers, right y-axis) over environment steps; shaded regions indicate $\pm 1$ standard deviation across seeds.}
    \label{fig:training_curves}
    \vspace{-0.5cm}
\end{figure}

\subsection{Real-World Evaluation}\label{sec:ExperimentalValidation}

The framework is validated on a physical Unitree Go2 quadruped. A remote unit (Intel i9-13900K, 64\,GB RAM) executes the planning algorithms, communicating via Wi-Fi. State estimation is provided by an OptiTrack system (ten Prime$^{\text{x}}$\,13 cameras, 120\,Hz, $\pm 0.02$\,mm accuracy). The closed-loop control operates at 50\,Hz. A video demonstration of the hardware deployment, along with the complete source code, is available at GitHub.\footnote{\scriptsize \url{https://github.com/mohsen1amiri/PriPG-RL_UnitreeGo2.git}}\vspace{-0.1cm}

\begin{figure}[h]
    \centering
    \includegraphics[width=0.95\linewidth]{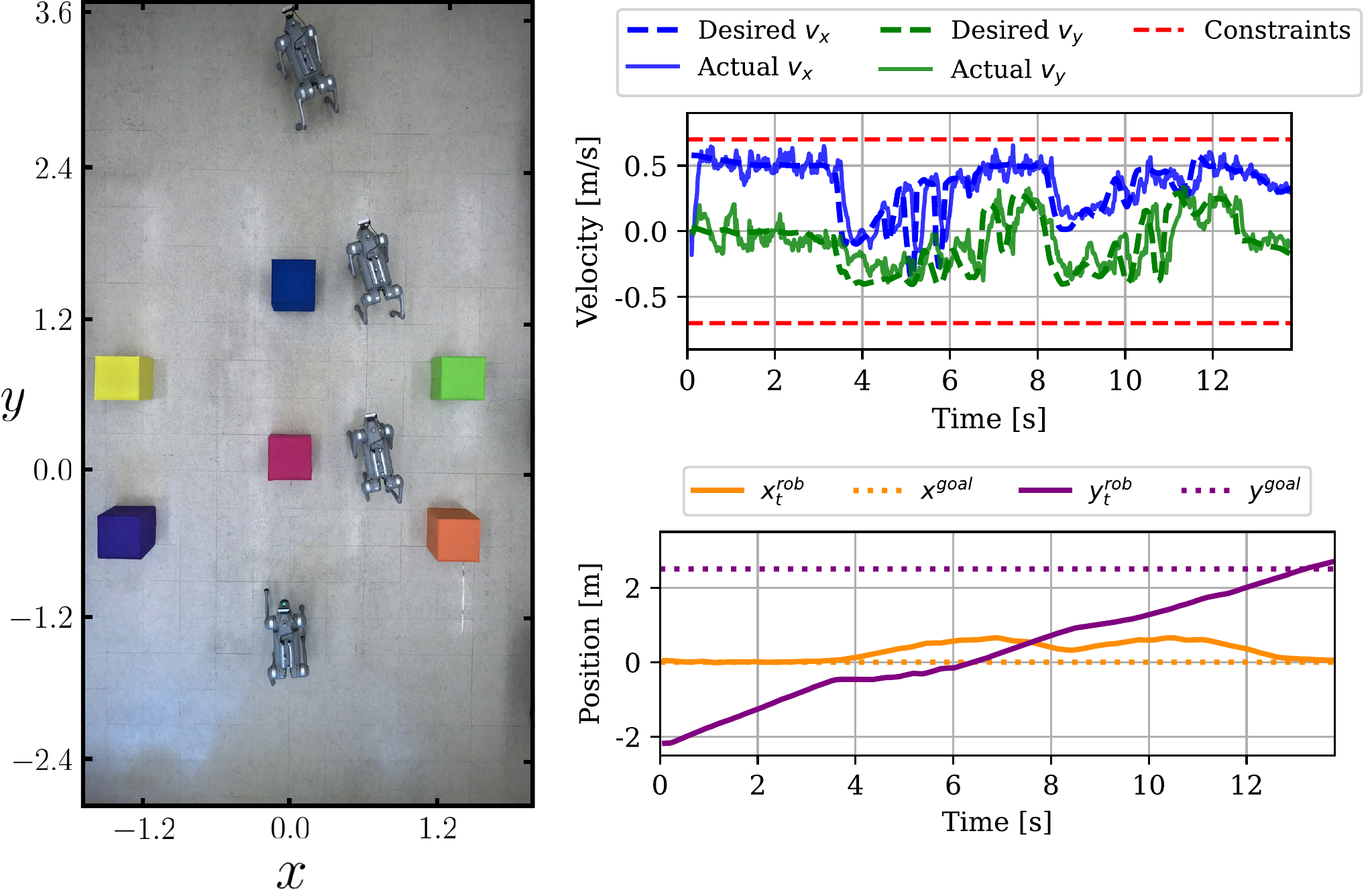}
    \vspace{-.5em}\caption{Hardware experiments using the Unitree Go2 quadruped. The left subfigure shows a composite image of the Unitree Go2 navigating an obstacle-rich environment under P2P-SAC. The top-right subfigure illustrates the desired velocities generated by the proposed method and the actual velocities measured by the onboard hardware, demonstrating that the velocity constraints are satisfied at all times. The bottom-right subfigure presents the trajectory in Cartesian coordinates along with the goal X-Y position.}\vspace{-.5em}
    \label{fig:experiment}
\end{figure}

Fig.~\ref{fig:experiment} shows the experimental results. The quadruped successfully avoids all obstacles within the velocity constraints and reaches the goal, demonstrating that the policy trained via P2P-SAC transfers to hardware and maintains safe trajectories under real-world conditions.



\section{Conclusion}\label{sec:conclusion}

We presented PriPG-RL, a framework for training reactive RL policies
under partial observability by leveraging an anytime-feasible planner agent, which is available only during training. The framework pairs two instantiations: REAP as an anytime-feasible MPC planner agent, and P2P-SAC as a learning agent whose planner-regularized objective provably separates useful privileged guidance from
irreducible aliasing variance
(Theorem~\ref{thm:P2P_objective}). Simulation in NVIDIA Isaac Lab
and deployment on a Unitree Go2 quadruped confirm that P2P-SAC
achieves reliable obstacle avoidance in a POMDP setting where
standard SAC and PPO fail entirely. Future work will extend the PriPG-RL framework beyond reactive policies by proposing a time-varying, anytime-feasible planner agent to supervise history-aware architectures, thereby resolving the temporal ambiguities introduced by non-stationary environments.

\bibliographystyle{IEEEtran.bst} 
\bibliography{ref}

@book{sutton1998reinforcement,
  title={Reinforcement learning: An introduction},
  author={Sutton, Richard S and Barto, Andrew G and others},
  volume={1},
  number={1},
  year={1998},
  publisher={MIT press Cambridge}
}

@article{amiri2026dynamic,
  title={A dynamic embedding method for the real-time solution of time-varying constrained convex optimization problems},
  author={Amiri, Mohsen and Kolmanovsky, Ilya and Hosseinzadeh, Mehdi},
  journal={Systems \& Control Letters},
  volume={209},
  pages={106352},
  year={2026},
  publisher={Elsevier}
}

@article{mnih2015human,
  title={Human-level control through deep reinforcement learning},
  author={Mnih, Volodymyr and Kavukcuoglu, Koray and Silver, David and Rusu, Andrei A and Veness, Joel and Bellemare, Marc G and Graves, Alex and Riedmiller, Martin and Fidjeland, Andreas K and Ostrovski, Georg and others},
  journal={nature},
  volume={518},
  number={7540},
  pages={529--533},
  year={2015},
  publisher={Nature Publishing Group}
}

@inproceedings{duan2016benchmarking,
  title={Benchmarking deep reinforcement learning for continuous control},
  author={Duan, Yan and Chen, Xi and Houthooft, Rein and Schulman, John and Abbeel, Pieter},
  booktitle={International conference on machine learning},
  pages={1329--1338},
  year={2016},
  organization={PMLR}
}

@inproceedings{beikmohammadi2023ta,
  title={Ta-explore: Teacher-assisted exploration for facilitating fast reinforcement learning},
  author={Beikmohammadi, Ali and Magn{\'u}sson, Sindri},
  booktitle={Proceedings of the 2023 International Conference on Autonomous Agents and Multiagent Systems},
  pages={2412--2414},
  year={2023}
}

@article{janner2019trust,
  title={When to trust your model: Model-based policy optimization},
  author={Janner, Michael and Fu, Justin and Zhang, Marvin and Levine, Sergey},
  journal={Advances in neural information processing systems},
  volume={32},
  year={2019}
}

@article{amodei2016concrete,
  title={Concrete problems in AI safety},
  author={Amodei, Dario and Olah, Chris and Steinhardt, Jacob and Christiano, Paul and Schulman, John and Man{\'e}, Dan},
  journal={arXiv preprint arXiv:1606.06565},
  year={2016}
}

@article{uesato2018rigorous,
  title={Rigorous agent evaluation: An adversarial approach to uncover catastrophic failures},
  author={Uesato, Jonathan and Kumar, Ananya and Szepesvari, Csaba and Erez, Tom and Ruderman, Avraham and Anderson, Keith and Heess, Nicolas and Kohli, Pushmeet and others},
  journal={arXiv preprint arXiv:1812.01647},
  year={2018}
}

@inproceedings{haarnoja2018soft,
  title={Soft actor-critic: Off-policy maximum entropy deep reinforcement learning with a stochastic actor},
  author={Haarnoja, Tuomas and Zhou, Aurick and Abbeel, Pieter and Levine, Sergey},
  booktitle={International conference on machine learning},
  pages={1861--1870},
  year={2018},
  organization={Pmlr}
}

@article{schulman2017proximal,
  title={Proximal policy optimization algorithms},
  author={Schulman, John and Wolski, Filip and Dhariwal, Prafulla and Radford, Alec and Klimov, Oleg},
  journal={arXiv preprint arXiv:1707.06347},
  year={2017}
}

@inproceedings{fujimoto2018addressing,
  title={Addressing function approximation error in actor-critic methods},
  author={Fujimoto, Scott and Hoof, Herke and Meger, David},
  booktitle={International conference on machine learning},
  pages={1587--1596},
  year={2018},
  organization={PMLR}
}

@article{beikmohammadi2024accelerating,
  title={Accelerating actor-critic-based algorithms via pseudo-labels derived from prior knowledge},
  author={Beikmohammadi, Ali and Magn{\'u}sson, Sindri},
  journal={Information Sciences},
  volume={661},
  pages={120182},
  year={2024},
  publisher={Elsevier}
}

@article{Hosseinzadeh2023RobustTermination,
  title={Robust-to-early termination model predictive control},
  author={Hosseinzadeh, Mehdi and Sinopoli, Bruno and Kolmanovsky, Ilya and Baruah, Sanjoy},
  journal={IEEE transactions on automatic control},
  volume={69},
  number={4},
  pages={2507--2513},
  year={2023},
  publisher={IEEE}
}

@article{amiri2025practical,
  title={Practical considerations for implementing robust-to-early termination model predictive control},
  author={Amiri, Mohsen and Hosseinzadeh, Mehdi},
  journal={Systems \& Control Letters},
  volume={196},
  pages={106018},
  year={2025},
  publisher={Elsevier}
}

@article{amiri2025reap,
  title={{REAP-T}: A {MATLAB} Toolbox for Implementing Robust-to-Early Termination Model Predictive Control},
  author={Amiri, Mohsen and Hosseinzadeh, Mehdi},
  journal={IFAC-PapersOnLine},
  volume={59},
  number={30},
  pages={1096--1101},
  year={2025},
  publisher={Elsevier}
}

@article{oh2025discovering,
  title={Discovering state-of-the-art reinforcement learning algorithms},
  author={Oh, Junhyuk and Farquhar, Gregory and Kemaev, Iurii and Calian, Dan A and Hessel, Matteo and Zintgraf, Luisa and Singh, Satinder and Van Hasselt, Hado and Silver, David},
  journal={Nature},
  volume={648},
  number={8093},
  pages={312--319},
  year={2025},
  publisher={Nature Publishing Group UK London}
}

@article{shakya2023reinforcement,
  title={Reinforcement learning algorithms: A brief survey},
  author={Shakya, Ashish Kumar and Pillai, Gopinatha and Chakrabarty, Sohom},
  journal={Expert Systems with Applications},
  volume={231},
  pages={120495},
  year={2023},
  publisher={Elsevier}
}

@article{lee2020learning,
  title={Learning quadrupedal locomotion over challenging terrain},
  author={Lee, Joonho and Hwangbo, Jemin and Wellhausen, Lorenz and Koltun, Vladlen and Hutter, Marco},
  journal={Science robotics},
  volume={5},
  number={47},
  pages={eabc5986},
  year={2020},
  publisher={American Association for the Advancement of Science}
}

@inproceedings{siekmann2021blind,
  title={Blind bipedal stair traversal via sim-to-real reinforcement learning},
  author={Siekmann, Jonah and Godse, Yesh and Fern, Alan and Hurst, Jonathan},
  booktitle={Robotics: Science and Systems},
  year={2021}
}

@inproceedings{margolis2022rapid,
  title={Rapid locomotion via reinforcement learning},
  author={Margolis, Gabriel B and Yang, Ge and Paull, Liam and Agrawal, Pulkit},
  booktitle={Robotics: Science and Systems},
  year={2022}
}

@inproceedings{hester2018deep,
  title={Deep q-learning from demonstrations},
  author={Hester, Todd and Vecerik, Matej and Pietquin, Olivier and Lanctot, Marc and Schaul, Tom and Piot, Bilal and Horgan, Dan and Quan, John and Sendonaris, Andrew and Osband, Ian and others},
  booktitle={Proceedings of the AAAI conference on artificial intelligence},
  volume={32},
  number={1},
  year={2018}
}

@article{vecerik2018leveraging,
  title={Leveraging demonstrations for deep reinforcement learning on robotics problems with sparse rewards},
  author={Vecerik, Mel and Hester, Todd and Scholz, Jonathan and Wang, Fumin and Pietquin, Olivier and Piot, Bilal and Heess, Nicolas and Roth{\"o}rl, Thomas and Lampe, Thomas and Riedmiller, Martin},
  journal={arXiv preprint arXiv:1707.08817},
  year={2017}
}

@inproceedings{nair2018overcoming,
  title={Overcoming exploration in reinforcement learning with demonstrations},
  author={Nair, Ashvin and McGrew, Bob and Andrychowicz, Marcin and Zaremba, Wojciech and Abbeel, Pieter},
  booktitle={2018 IEEE international conference on robotics and automation (ICRA)},
  pages={6292--6299},
  year={2018},
  organization={IEEE}
}

@article{nair2020awac,
  title={Awac: Accelerating online reinforcement learning with offline datasets},
  author={Nair, Ashvin and Gupta, Abhishek and Dalal, Murtaza and Levine, Sergey},
  journal={arXiv preprint arXiv:2006.09359},
  year={2020}
}

@article{mittal2025isaac,
  title={Isaac lab: A gpu-accelerated simulation framework for multi-modal robot learning},
  author={Mittal, Mayank and Roth, Pascal and Tigue, James and Richard, Antoine and Zhang, Octi and Du, Peter and Serrano-Munoz, Antonio and Yao, Xinjie and Zurbr{\"u}gg, Ren{\'e} and Rudin, Nikita and others},
  journal={arXiv preprint arXiv:2511.04831},
  year={2025}
}

@article{schwarke2025rsl,
  title={Rsl-rl: A learning library for robotics research},
  author={Schwarke, Clemens and Mittal, Mayank and Rudin, Nikita and Hoeller, David and Hutter, Marco},
  journal={arXiv preprint arXiv:2509.10771},
  year={2025}
}

@article{gilks1992adaptive,
  title={Adaptive rejection sampling for Gibbs sampling},
  author={Gilks, Walter R and Wild, Pascal},
  journal={Journal of the Royal Statistical Society: Series C (Applied Statistics)},
  volume={41},
  number={2},
  pages={337--348},
  year={1992},
  publisher={Wiley Online Library}
}

@article{lowrey2018plan,
  title={Plan online, learn offline: Efficient learning and exploration via model-based control},
  author={Lowrey, Kendall and Rajeswaran, Aravind and Kakade, Sham and Todorov, Emanuel and Mordatch, Igor},
  journal={arXiv preprint arXiv:1811.01848},
  year={2018}
}

@inproceedings{nagabandi2018neural,
  title={Neural network dynamics for model-based deep reinforcement learning with model-free fine-tuning},
  author={Nagabandi, Anusha and Kahn, Gregory and Fearing, Ronald S and Levine, Sergey},
  booktitle={2018 IEEE international conference on robotics and automation (ICRA)},
  pages={7559--7566},
  year={2018},
  organization={IEEE}
}

@article{singh1994learning,
  title={Learning without state-estimation in partially observable 
         Markovian decision processes},
  author={Singh, Satinder P and Jaakkola, Tommi and Jordan, Michael I},
  journal={ICML},
  year={1994}
}

@article{amiri2025reinforcement,
  title={Reinforcement learning in switching non-stationary markov decision processes: Algorithms and convergence analysis},
  author={Amiri, Mohsen and Magn{\'u}sson, Sindri},
  journal={arXiv preprint arXiv:2503.18607},
  year={2025}
}

@inproceedings{amiri2024convergence,
  title={On the convergence of td-learning on markov reward processes with hidden states},
  author={Amiri, Mohsen and Magn{\'u}sson, Sindri},
  booktitle={2024 European Control Conference (ECC)},
  pages={2097--2104},
  year={2024},
  organization={IEEE}
}

@inproceedings{chen2020learning,
   title={Learning by cheating},
   author={Chen, Dian and Zhou, Brady and Koltun, Vladlen and Kr{\"a}henb{\"u}hl, Philipp},
   booktitle={Proc. Conference on Robot Learning (CoRL)},
   pages={66--75},
   year={2020}
 }

@inproceedings{kumar2021rma,
   title={{RMA}: Rapid motor adaptation for legged robots},
   author={Kumar, Ashish and Fu, Zipeng and Pathak, Deepak and Malik, Jitendra},
   booktitle={Proc. Robotics: Science and Systems (RSS)},
   year={2021}
 }

@article{lauri2022partially,
  title={Partially observable markov decision processes in robotics: A survey},
  author={Lauri, Mikko and Hsu, David and Pajarinen, Joni},
  journal={IEEE Transactions on Robotics},
  volume={39},
  number={1},
  pages={21--40},
  year={2022},
  publisher={IEEE}
}

@article{ren2022tutorial,
  title={A tutorial review of neural network modeling approaches for model predictive control},
  author={Ren, Yi Ming and Alhajeri, Mohammed S and Luo, Junwei and Chen, Scarlett and Abdullah, Fahim and Wu, Zhe and Christofides, Panagiotis D},
  journal={Computers \& Chemical Engineering},
  volume={165},
  pages={107956},
  year={2022},
  publisher={Elsevier}
}

\end{document}